\theoremstyle{plain}
\newtheorem{theorem}{Theorem}[section]
\newtheorem{proposition}[theorem]{Proposition}
\newtheorem{lemma}[theorem]{Lemma}
\newtheorem{corollary}[theorem]{Corollary}
\theoremstyle{definition}
\newtheorem{definition}[theorem]{Definition}
\theoremstyle{remark}
\newcommand{\data}{\mathcal{D}}
\newcommand{\E}{\mathbb{E}}
\newcommand{\normdist}{\mathcal{N}}
\newcommand{\loss}{\mathcal{L}}
\newcommand{\KL}{\text{KL}}
\newcommand{\VI}{\text{VI}}
\newcommand{\X}{\mathbf{X}}
\newcommand{\Y}{\mathbf{Y}}
\newcommand{\x}{\mathbf{x}}
\newcommand{\y}{\mathbf{y}}
\newcommand{\bb}{\mathbf{b}}
\newcommand{\f}{\mathbf{f}}
\newcommand{\g}{g}
\newcommand{\w}{\boldsymbol{\omega}}
\newcommand{\WW}{\mathbf{W}}
\newcommand{\W}{\mathcal{W}}
\newcommand{\V}{\mathcal{V}}
\newcommand{\m}{\boldsymbol{\mu}}
\newcommand{\s}{\boldsymbol{\Sigma}}
\newcommand{\vol}{\text{vol}}
\newcommand{\diag}{\text{diag}}
\icmltitlerunning{Permutation Invariant Variational Posteriors for Bayesian Neural Networks}
\begin{document}

\twocolumn[
\icmltitle{Variational Inference Failures Under Model Symmetries: Permutation Invariant Posteriors for Bayesian Neural Networks}

\icmlsetsymbol{equal}{*}

\begin{icmlauthorlist}
\icmlauthor{Yoav Gelberg}{oxford}
\icmlauthor{Tycho F.A. van der Ouderaa}{imperial,oxford}
\icmlauthor{Mark van der Wilk}{oxford}
\icmlauthor{Yarin Gal}{oxford}
\end{icmlauthorlist}

\icmlaffiliation{oxford}{University of Oxford, Oxford, UK}
\icmlaffiliation{imperial}{Imperial College London, London, UK}

\icmlcorrespondingauthor{Yoav Gelberg}{yoav@robots.ox.ac.uk}

\icmlkeywords{Bayesian Deep Learning, Variational Inference, Geometric Deep Learning, Group Invariance}

\vskip 0.3in
]

\printAffiliationsAndNotice{}  %

\begin{abstract}
Weight space symmetries in neural network architectures, such as permutation symmetries in MLPs, give rise to Bayesian neural network (BNN) posteriors with many equivalent modes. This multimodality poses a challenge for variational inference (VI) techniques, which typically rely on approximating the posterior with a unimodal distribution. In this work, we investigate the impact of weight space permutation symmetries on VI. We demonstrate, both theoretically and empirically, that these symmetries lead to biases in the approximate posterior, which degrade predictive performance and posterior fit if not explicitly accounted for. To mitigate this behavior, we leverage the symmetric structure of the posterior and devise a \textit{symmetrization} mechanism for constructing permutation invariant variational posteriors. We show that the symmetrized distribution has a strictly better fit to the true posterior, and that it can be trained using the original ELBO objective with a modified KL regularization term. We demonstrate experimentally that our approach mitigates the aforementioned biases and results in improved predictions and a higher ELBO.
\end{abstract}

\section{Introduction}
\looseness=-1
Bayesian neural networks (BNNs) \cite{118274,10.1145/168304.168306, Neal1995BayesianLF} model neural networks probabilistically by inferring a posterior distribution over their weights. This approach offers robustness to overfitting, epistemic uncertainty estimates, and the ability to learn from limited data, but is typically intractable due to the complexity of the posterior distribution. To use BNNs in practice, posterior estimates are obtained through approximate inference techniques such as variational inference (VI) \citep{blundell2015weight}.

\looseness=-1
While VI facilitates practical posterior estimation, it often relies on unimodal variational posterior families, limiting the accuracy of approximating the typically highly multimodal posterior. Multimodality of the posterior arises from various sources, including the data, task, and model choice. In BNNs specifically, inherent redundancies in the parametric representation of the model, caused by architectural symmetries \cite{HECHTNIELSEN1990129, Pourzanjani2017ImprovingTI, Rossi2023OnPS, laurent2024a}, are key sources of posterior multimodality. This phenomenon, sometimes referred to as non-identifiability, leads to the emergence of equivalent modes in the posterior that represent networks with identical functionality but different weight configurations.

\begin{figure}[t]
\centering
\begin{tabular}{cc}
  \includegraphics[width=0.45\linewidth]{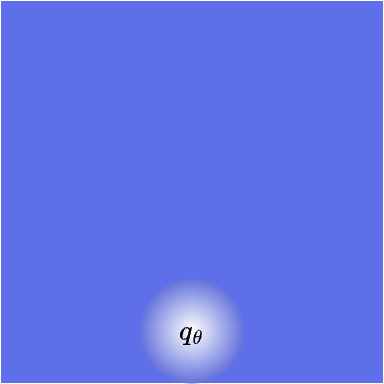} &
  \includegraphics[width=0.45\linewidth]{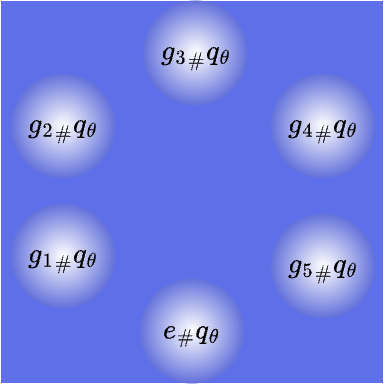}
\end{tabular}
\caption{\textbf{Symmetrization of the variational posterior.} Given a variational distribution $q_\theta(\w)$, its symmetrization with respect to a group $G$ acting on the underlying space is the average of the pushforwards of $q_\theta(\w)$ over all group elements $g \in G$.}
\label{fig:symmterization_of_the_posterior}
\end{figure}

\looseness=-1
This work investigates the behavior of unimodal VI approximations in the presence of equivalent modes. We show that unimodal variational distributions tend to interpolate between closely spaced modes, causing a shift in mean and variance. In the case of equivalent modes of the posterior, interpolation results in worse predictive performance and contributes to the underfitting tendencies of variational BNNs \cite{Wenzel2020AppendixFH, pmlr-v80-ghosh18a, Dusenberry2020EfficientAS}. We demonstrate this effect in a VI setup with a Gaussian mixture target distribution and in a simple BNN with a tractable posterior. Additionally, we provide theoretical analysis on the spacing of equivalent modes in BNN posteriors, suggesting the severity of the problem grows with model width.

\vspace{-1mm}
\looseness=-1
To address these limitations, we leverage the symmetric structure of the posterior and propose an approach that ``bakes in'' permutation invariance into the variational distribution. We devise a generic symmetrization mechanism that averages a black-box variational posterior family over the orbits of the permutation symmetry group. This mechanism results in a permutation invariant variational posterior which has a \textit{provably} better fit to the true posterior. We show that training the symmetrized distribution can be done by modifying the prior KL term in the ELBO associated with the original distribution. Evaluating this new KL term requires computing the entropy of a symmetric mixture of densities, for which we derive a computationally efficient estimator. This is significant as it eliminates the need for computationally intractable averaging over \textit{all} weight-space permutations. Instead, symmetry is accounted for by stochastically adapting the KL regularization term. We conduct empirical evaluations comparing our symmetrization method to standard mean-field VI. We demonstrate that our approach achieves better predictive performance and a higher ELBO.

\vspace{-1mm}
\looseness=-1
\textbf{Contributions.} This paper makes the following contributions: \begin{inparaenum}[(1)] \item It demonstrates that in the presence of weight space symmetries, the use of unimodal approximate posterior distributions may result in interpolation between equivalent modes, which degrades predictive performance and posterior fit. \item It devises a symmetrization procedure for approximate posterior families, proves that it strictly improves posterior fit, and derives a tractable ELBO objective for it by constructing a novel entropy estimator for symmetric mixture of densities. \item It demonstrates, experimentally, that VI using symmetrized variational posteriors prevents equivalent mode interpolation and results in better predictions and posterior fit, improving overall performance over mean-field VI.\end{inparaenum}

\looseness=-1
\textbf{Related work.} Several studies have looked into the effects of non-identifiability on approximate inference in BNNs. \citet{kurle2022on} formally discussed the effects of invariances of the likelihood function on mean-field VI and proposed a mitigation for data-dependent translation symmetries in Bayesian linear regression and BNNs. \citet{Pourzanjani2017ImprovingTI} and \citet{Kurle2021} suggested constraining the model architecture to lower the number of weight space symmetries. Other works proposed approximating the posterior predictive directly \cite{sun2018functional, rudner2022fsvi}, thereby performing VI in function space. In this work, we focus on the effects of permutation symmetries, and mitigate the problem without constraining the architecture. Instead, we construct a permutation invariant variational posterior and derive an estimator for its VI objective. To the best of our knowledge, this is the first work to develop \textit{trainable} permutation invariant variational posteriors for BNNs.

\section{Preliminaries} \label{sec:perlimiaries}
\looseness=-1
\textbf{Notation.} We use bold lower case letters (e.g. $\x$) for vectors, bold upper case letters (e.g. $\X$) for matrices, and standard letters (e.g. $x$) for scalars. $g$ is reserved for group elements and $e$ denotes the identity. $\w \in \W$ represents model parameters and $\f^{\w}$ denotes the function parameterized by $\w$. $\W$ is referred to as the weight space of $\f^{\w}$. Throughout the paper we assume all discussed distributions admit a density function, and adopt the convention of identifying probability distributions with their densities (e.g. $p$ denotes both a probability measure $p(\x \in A)$ and its density $p(\x)$).

\looseness=-1
\textbf{Bayesian deep learning.} Let $\data = (\X, \Y)$ be a dataset, where $\X = (\x_1, \dots, \x_N)$ represents the inputs and $\Y = (\y_1, \dots, \y_N)$ represents the corresponding outputs. Bayesian deep learning seeks parameters $\w \in \W$ of a function $\y = \f^{\w}(\x)$ that are most \textit{likely} to have generated the observed outputs $\Y$. To do so, we place a prior distribution $p(\w)$ over $\W$ which encodes our initial beliefs about the likelihood of the parameters before observing any data. Upon observing the data, the prior is updated to the posterior distribution $p(\w \mid \data)$ using Bayes' theorem
\begin{equation}\label{eq:bayes}
p(\w \mid \data) = \frac{p(\Y \mid \X, \w) p(\w)}{p(\Y \mid \X)}
\end{equation}
\looseness=-1
Here, $p(\Y \mid \X, \w)$ denotes the likelihood, which describes the probability of observing the outputs $\Y$ given the inputs $\X$ and parameters $\w$. Assuming i.i.d data, $p(\Y \mid \X, \w) = \prod_{i = 1}^N p(\y_i \mid \x_i, \w) = \prod_{i = 1}^N p(\y_i \mid \f^{\w}(\x_i))$. The choice of the likelihood function $p(\y \mid \f^{\w}(\x))$ depends on the task:
\begin{enumerate}
    \item For classification tasks, a common choice is the softmax likelihood $p(y = k \mid\x, \w) = \text{softmax} (\f^{\w} (\x))_k$.
    \item In regression tasks, likelihood is usually assumed to be Gaussian, i.e. $p(\y \mid \x, \w) = \mathcal{N}(\y ; \f^{\w} (\x), \sigma^{-1} \mathbf{I})$.
\end{enumerate}

\looseness=-1
A key component in Equation \eqref{eq:bayes} is the normalizing constant, $p(\Y \mid \X)$, also known as the \textit{model evidence} or \textit{marginal likelihood}. $p(\Y \mid \X)$ integrates over all possible parameter values according to the prior
\begin{equation}\label{eq:marginal_likelihood}
p(\Y \mid \X) = \int p(\Y \mid \X, \w) p(\w) d\w
\end{equation}
\looseness=-1
While analytical computation of the marginal likelihood (and thus the posterior) is possible for simple models, it becomes intractable for complex models such as deep neural networks.

\looseness=-1
\textbf{Variational inference.} In cases where exact posterior evaluation is not feasible, variational inference provides an approach to approximate the true posterior $p(\w \mid \data)$ using a tractable variational distribution $q_\theta (\w)$. To find the best approximate posterior distribution, the Kullback–Leibler divergence (KL) between $q_\theta(\w)$ and the true posterior is minimized
\begin{equation*}
\begin{split}
    \KL\left(q_\theta (\w) \mid\mid p(\w \mid \data)\right) &= \int q_\theta(\w) \log\left(\frac{q_\theta (\w)}{p(\w \mid \data)}\right) d\w \\
    &= \E_{\w \sim q_\theta} \left( \log\left(\frac{q_\theta (\w)}{p(\w \mid \data)}\right) \right)
\end{split}
\end{equation*}
Since the true posterior is intractable for large models, the above KL  divergence is intractable in general. However, it turns out that minimizing KL divergence is equivalent to maximizing the evidence lower bound (ELBO), defined by
\begin{multline}\label{eq:elbo}
    \loss_{\text{VI}}(\theta) := \E_{\w \sim q_\theta(\w)} \left(\log\left(p(\Y \mid \X, \w)\right)\right)  \\ -\KL\left(q_\theta(\w) \mid \mid p(\w)\right) 
\end{multline}
The first term of the ELBO is referred to as the \textit{expected log-likelihood} term and the second term is referred to as the \textit{prior KL} term. Maximizing the expected log-likelihood term encourages $q_\theta(\w)$ to fit the data, while minimizing the prior KL term encourages $q_\theta(\w)$ to remain close to the prior. The term evidence lower bound comes from the identity
\begin{equation*}
\begin{split}
    \loss_\VI(\theta) &= \log\left(p\left(\Y \mid \X \right)\right) - \KL \left(q_\theta (\w) \mid \mid p(\w \mid \data)\right) \\
    &\leq \log(p(\Y \mid \X))
\end{split}
\end{equation*}
Therefore, since the marginal likelihood is constant w.r.t $\theta$, maximizing the ELBO is equivalent to minimizing KL with the posterior. For more details, see \cite{blundell2015weight}.

\textbf{Group representations.} Given a group $G$ and a vector space $\V$, a representation of $G$ is a homomorphism $\rho: G \to GL(\V)$. We call $\rho$ an \textit{orthonormal} representation if $\V$ admits an inner product and $\rho(G) \subseteq O(\V)$, i.e. if $\rho(g)$ are norm-preserving linear transformations. When $\rho$ is clear from the context, we identify $g$ with its representation $\rho(g)$ and write $g \cdot \mathbf{v} := \rho(g) \mathbf{v}$, $\det(g) := \det(\rho(g))$.

\textbf{Group invariant functions and measures.} A function $f: \V \to \mathcal{Y}$ is called $G$-invariant if $\forall g \in G, \mathbf{v} \in \V, \, f(g \cdot \mathbf{v}) = f(\mathbf{v})$. A measure $\nu$ on $\V$ is called $G$-invariant if $\forall g \in G$ and for every measurable set $A \subseteq \V$, $\nu \left(g^{-1} \cdot A\right) = \nu(A)$. The distribution defined by $g_\# \nu(A) := \nu(g^{-1} \cdot A)$ is called the $g$-pushforward measure. Under this definition, $\nu$ is $G$-invariant iff $g_\# \nu = \nu$ for all $g \in G$. If $\nu$ has a density, $f_\nu$, i.e. $\nu(A) = \int_A f_\nu(\mathbf{v}) \, d\mathbf{v}$, it transforms under pushforward and becomes $f_{g_\# \nu} (\mathbf{v}) = \frac{f_\nu(g^{-1} \cdot \mathbf{v})}{\lvert \det(g) \rvert}$. If the representation is orthonormal (thus $\det(g) = \pm 1$) then $f_{g_\# \nu} (\mathbf{v}) = f_\nu(g^{-1} \cdot \mathbf{v})$.

\textbf{Multilayer perceptrons.} Multilayer Perceptrons (MLPs) are feedforward neural networks with fully connected layers. Formally, an $L$-layer MLP is a parametric function $\f = \f^{\w}$ defined recursively by
\begin{equation*}
\f^{\w}(\x) = \x_L, \quad \x_{l + 1} = \sigma(\WW_{l + 1} \x_l + \textbf{b}_{l + 1}), \quad \x_0 = \x
\end{equation*}
where $\WW_l \in \mathbb{R}^{d_l \times d_{l -1}}$ and $\textbf{b}_l \in \mathbb{R}^{d_l}$. $\w = (\WW_1, \dots, \WW_L, \textbf{b}_1, \dots, \textbf{b}_L)$ represents the MLP's parameters referred to as weights. $d_1, \dots, d_{L - 1}$ are called the \textit{hidden dimensions}, $d_0$ is the \textit{input dimension}, and $d_L$ is the \textit{output dimension}. Note that the weight space of an MLP is a real vector space and can be identified with $\bigoplus_{l = 1}^L\left(\mathbb{R}^{d_l \times d_{l - 1}} \oplus \mathbb{R}^{d_l}\right)$.

\textbf{Permutation symmetries of MLPs.} MLPs exhibit well-documented permutation invariance properties \cite{HECHTNIELSEN1990129, Pourzanjani2017ImprovingTI,simsek2020weightspace, ainsworth2023git, DWS, Zhou2024UniversalNF}: permuting the neurons of any hidden layer, while keeping track of the connections to the neighboring layers, preserves the function represented by the MLP. More formally, this invariance can be described by a group representation. 
\begin{definition}[MLP permutation symmetry group]
    Given an MLP with hidden dimensions $d_1, \dots d_{L-1}$, its permutation symmetry group is
    \begin{equation*}
        G = S_{d_1} \times \dots \times S_{d_{L - 1}}
    \end{equation*}
\end{definition}
\looseness=-1
Given $\w  = (\WW_1, \dots, \WW_L, \textbf{b}_1, \dots, \textbf{b}_L) \in \W$ and $g = (\tau_1, \dots, \tau_{L - 1}) \in G$, $g$'s action on $\w$ is defined by $g \cdot \w = \w' = (\WW_1', \dots, \WW_L', \textbf{b}_1', \dots, \textbf{b}_L')$ with
\begin{align*}
    &\WW_1' = \mathbf{P}_{\tau_1}^\top \WW_1, & &\bb_1' = \mathbf{P}_{\tau_1}^\top \bb_1 \\ 
    &\WW_l' = \mathbf{P}_{\tau_l}^\top \WW_l \mathbf{P}_{\tau_{l -1}}, & &\bb_l' = \mathbf{P}_{\tau_l}^\top \bb_l, \quad l \in \{2, \dots, L -1\} \\
    &\WW_{L}' = \WW_L \mathbf{P}_{\tau_{L - 1}}, & &\bb_L' = \bb_L
\end{align*}
where $\mathbf{P}_{\tau_l} \in \mathbb{R}^{d_l \times d_l}$ is the permutation matrix associated with $\tau_l \in S_{d_l}$. It's straightforward to verify that the mapping $\w \mapsto g \cdot \w$ is linear and norm-preserving and that the action of $G$ defines an orthonormal representation on $\W$. Additionally, for any point-wise non-linearity, the transformed weights under $g$ represent the same function as the original weights, i.e. $\f^{g \cdot \w} \equiv \f^{\w}$. In other words, the mapping $\w \mapsto \f^{\w}$ is $G$-invariant.

\looseness=-1
\textbf{Permutation invariance of the posterior.} Throughout the paper, we assume a $G$-invariant prior on $\W$. This assumption is reasonable as apriori all weight configurations representing the same function are equally likely. Additionally, the most commonly used priors over BNNs (such as the isotropic Gaussian prior, the de-facto standard \cite{fortuin2022bayesian}), are invariant to weight space permutations. Since the likelihood $p(\y \mid \x, \w) = p(\y \mid \f^{\w}(\x))$ is a function of $\f^{\w}(\x)$, it too is $G$-invariant ($p(\Y | \X, g^{-1} \cdot \w) = p(\Y | \X, \w)$). Therefore, from Equation \eqref{eq:bayes} we get:
\begin{proposition}\label{thm:invariance_of_posterior}
$p(\w \mid \data)$ is $G$-invariant.
\end{proposition}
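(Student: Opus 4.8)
The plan is to unfold Bayes' theorem from Equation \eqref{eq:bayes} and check that each factor behaves well under the $G$-action. First I would recall from the preliminaries that $G$ acts on $\W$ through an orthonormal representation, so $\lvert \det(g) \rvert = 1$ for every $g \in G$; consequently a measure on $\W$ with density $f$ is $G$-invariant if and only if $f(g^{-1} \cdot \w) = f(\w)$ for all $g \in G$ and $\w \in \W$. It therefore suffices to verify this pointwise identity for the posterior density $p(\w \mid \data)$.

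Next I would handle the two ingredients of the posterior separately. The prior term is immediate: $p(\w)$ is assumed $G$-invariant, so $p(g^{-1} \cdot \w) = p(\w)$. For the likelihood, I would use the fact established earlier that $\w \mapsto \f^{\w}$ is $G$-invariant, i.e. $\f^{g \cdot \w} \equiv \f^{\w}$ for all $g$; applying this with $g^{-1}$ in place of $g$ and using the i.i.d.\ factorization $p(\Y \mid \X, \w) = \prod_{i=1}^N p(\y_i \mid \f^{\w}(\x_i))$ gives $p(\Y \mid \X, g^{-1} \cdot \w) = \prod_{i=1}^N p(\y_i \mid \f^{g^{-1}\cdot\w}(\x_i)) = \prod_{i=1}^N p(\y_i \mid \f^{\w}(\x_i)) = p(\Y \mid \X, \w)$. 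Finally, the marginal likelihood $p(\Y \mid \X)$ in the denominator is a constant independent of $\w$.

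Putting these together via Equation \eqref{eq:bayes}:
\[
p(g^{-1} \cdot \w \mid \data) = \frac{p(\Y \mid \X, g^{-1} \cdot \w)\, p(g^{-1} \cdot \w)}{p(\Y \mid \X)} = \frac{p(\Y \mid \X, \w)\, p(\w)}{p(\Y \mid \X)} = p(\w \mid \data).
\]
Since this holds for every $g \in G$ and every $\w \in \W$, the posterior density --- and hence the posterior measure --- is $G$-invariant, which is the claim.

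There is no real obstacle here: the argument is a direct composition of $G$-invariances through Bayes' rule. The only point requiring mild care is the measure-versus-density bookkeeping --- the equivalence between $g_\# p(\cdot \mid \data) = p(\cdot \mid \data)$ and the pointwise identity $p(g^{-1}\cdot\w \mid \data) = p(\w\mid\data)$ --- which is exactly where orthonormality of the representation ($\lvert\det(g)\rvert = 1$) enters; were the representation not volume-preserving, a Jacobian factor would appear and the statement would need to be phrased accordingly.
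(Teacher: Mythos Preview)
Your proposal is correct and follows exactly the approach the paper takes: the paper simply observes (in the paragraph preceding the proposition) that the prior is assumed $G$-invariant and the likelihood is $G$-invariant because it factors through $\f^{\w}$, then invokes Bayes' rule \eqref{eq:bayes}. Your write-up is more detailed---in particular the remark about orthonormality handling the density-versus-measure bookkeeping is a nice addition---but the underlying argument is identical.
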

This invariance property causes mode multiplicity in the posterior. If $\w^*$ is a mode of $p(\w \mid \data)$, so is $g \cdot \w^*$ for all $g \in G$. This means that, for every mode of the posterior, there are up to $\lvert G \rvert = d_1! \cdots d_{L-1}!$ \textit{equivalent} modes. The significant number of equivalent modes poses a challenge for standard VI techniques, as we will explore in the next section.

\looseness=-1
It should be noted that MLPs and other deep learning architectures exhibit other weight space symmetries that are not considered in this work, such as scaling transformations \cite{Badrinarayanan2015UnderstandingSI, phuong2020functional, Entezari2021TheRO} and data dependent symmetries.

\section{Effects of Equivalent Modes in the Posterior on Unimodal VI Approximations} \label{sec:effect_of_permutations}

\textbf{Mode-seeking behavior of the KL divergence.} Variational Inference often employs unimodal approximate posteriors, such as the common mean-field approximation with a Gaussian variational posterior. A commonly cited justification for this approach is the mode-seeking behavior of the KL divergence \cite{pml1Book}. When approximating a fixed distribution $p(\w)$ with a simpler variational distribution $q_\theta(\w)$ via KL minimization, we expect to see a different behavior depending on the direction of the KL loss.
\vspace{-2mm}
\begin{enumerate}
    \item \textbf{Forward KL.} Minimizing $\KL(p(\w) \mid\mid q_\theta(\w))$ forces $q_\theta(\w)$ to include all areas for which $p(\w)$ assigns non-zero mass. This behavior is called \textit{mode-covering}.
    
    \item \textbf{Reverse KL.} Minimizing $\KL(q_\theta(\w) \mid\mid p(\w))$ 
    forces $q_\theta(\w)$ to exclude all the areas for which $p(\w)$ is zero. This results in $q_\theta(\w)$ assigning mass to few parts of the space, near $p(\w)$'s modes. This behavior is called \textit{mode-seeking}.
\end{enumerate}
\vspace{-2mm}
As seen in Equation \eqref{eq:elbo}, variational inference over BNNs employs reverse KL minimization. Therefore, we might intuitively expect a unimodal variational posterior to converge to one of the equivalent modes in the true posterior. Our experiments reveal a potential pitfall of this intuition. We demonstrate that for certain configurations of modes in the target distribution, a unimodal approximation can become ``stuck'' between two modes, even when using reverse KL minimization. 

\looseness=-1
\vspace{-1mm}
We approximate the Gaussian mixture $p_\alpha(\x) = \frac{1}{2} \left(\normdist(\x; \mathbf{0}, \mathbf{I}) + \normdist(\x; \alpha \mathbf{u}, \mathbf{I}) \right)$, where $\mathbf{u} \in \mathbb{R}^d$ is a unit vector and $\alpha$ parameterizes the distance between $p_\alpha(\x)$'s mixture components. We use a unimodal Gaussian variational approximation $q_{\m, \s}(\x) = \normdist(\x; \m, \s)$, with variational mean $\m \in \mathbb{R}^d$ and covariance matrix $\s \in \mathbb{R}^{d \times d}$. We initialize optimization with $(\m_0, \s_0) = (\mathbf{0}, \mathbf{I})$, corresponding to $p_\alpha(\x)$'s mixture component, in order to further encourage mode-seeking behavior. See additional details in Appendix \ref{apx:mode_interpolation}. The mode of the true optimum lies on the line segment $[\mathbf{0}, \alpha \mathbf{u}]$. Therefore, denoting the result of optimization by $(\m^*, \s^*)$, we interpret $\alpha^{-1} \lVert \m^* \rVert_2$ as a measure of the solution's \textit{interpolation} between $p_\alpha(\x)$'s components. $\alpha^{-1} \lVert \m^* \rVert_2 = \frac{1}{2}$ corresponds to perfect mid-point interpolation and $\alpha^{-1} \lVert \m^* \rVert_2 = 0$ means no interpolation.

Our results, summarized in Figure \ref{fig:revrese_kl}, suggest the existence of a threshold $\alpha^*$ (in this experiment $\alpha^* \approx 5$), controlling mode-seeking behavior. For mode discrepancy $\alpha \leq \alpha^*$, the optimization results are in fact \textit{mean-seeking} in the sense that $\m^*$ is a mid-point interpolation between between $0$ and $\alpha \textbf{u}$. Only for $\alpha \geq \alpha^*$, the behavior becomes mode-seeking. We empirically find that the threshold for mode-seeking behavior scales with the standard deviation of $p_\alpha(\x)$'s mixture components. For details, see Appendix \ref{apx:mode_interpolation}.

\begin{figure}[t]
    \centering
    \begin{tabular}{cc}
        \includegraphics[width=0.45\linewidth, trim={5 5 5 5},clip]{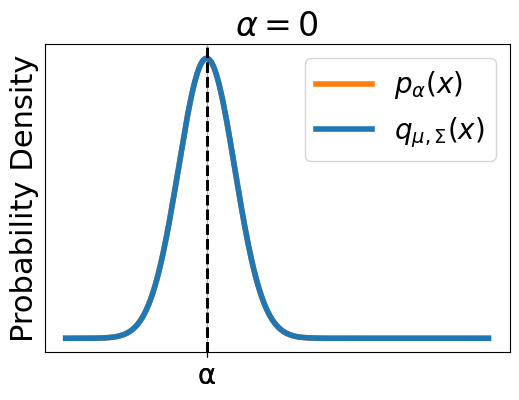} &
        \includegraphics[width=0.45\linewidth, trim={5 5 5 5},clip]{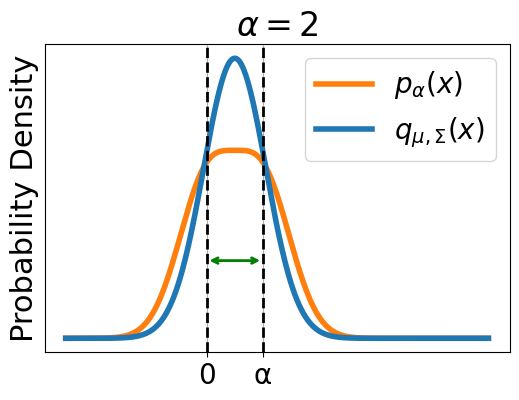} \\
        \includegraphics[width=0.45\linewidth, trim={5 5 5 5},clip]{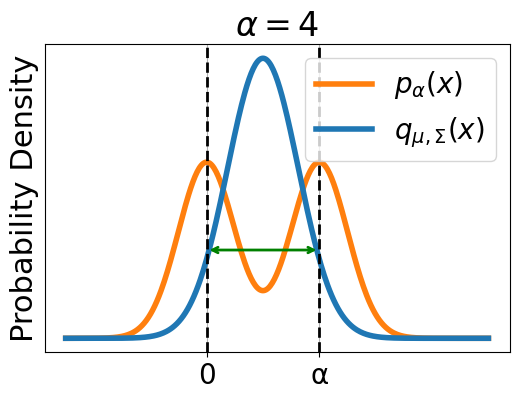} &
        \includegraphics[width=0.45\linewidth, trim={5 5 5 5},clip]{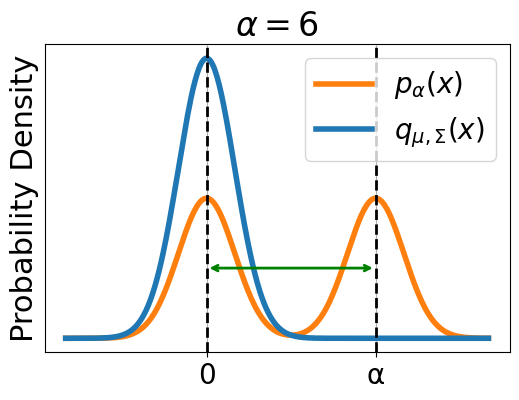} 
    \end{tabular}
    \caption{\textbf{Unimodal reverse KL minimizer of a bimodal target distribution.} Plots of the target distribution \textcolor{BurntOrange}{$q_\alpha(x)$} and the unimodal reverse KL minimizer \textcolor{RoyalBlue}{$q_{\m, \s}(x)$} in one dimension.}
    \label{fig:1d_example}
\end{figure}

\begin{figure*}[ht!]
    \centering
    \begin{tabular}{ccc}
        \includegraphics[width=0.3\textwidth, trim={5 0 0 0},clip]{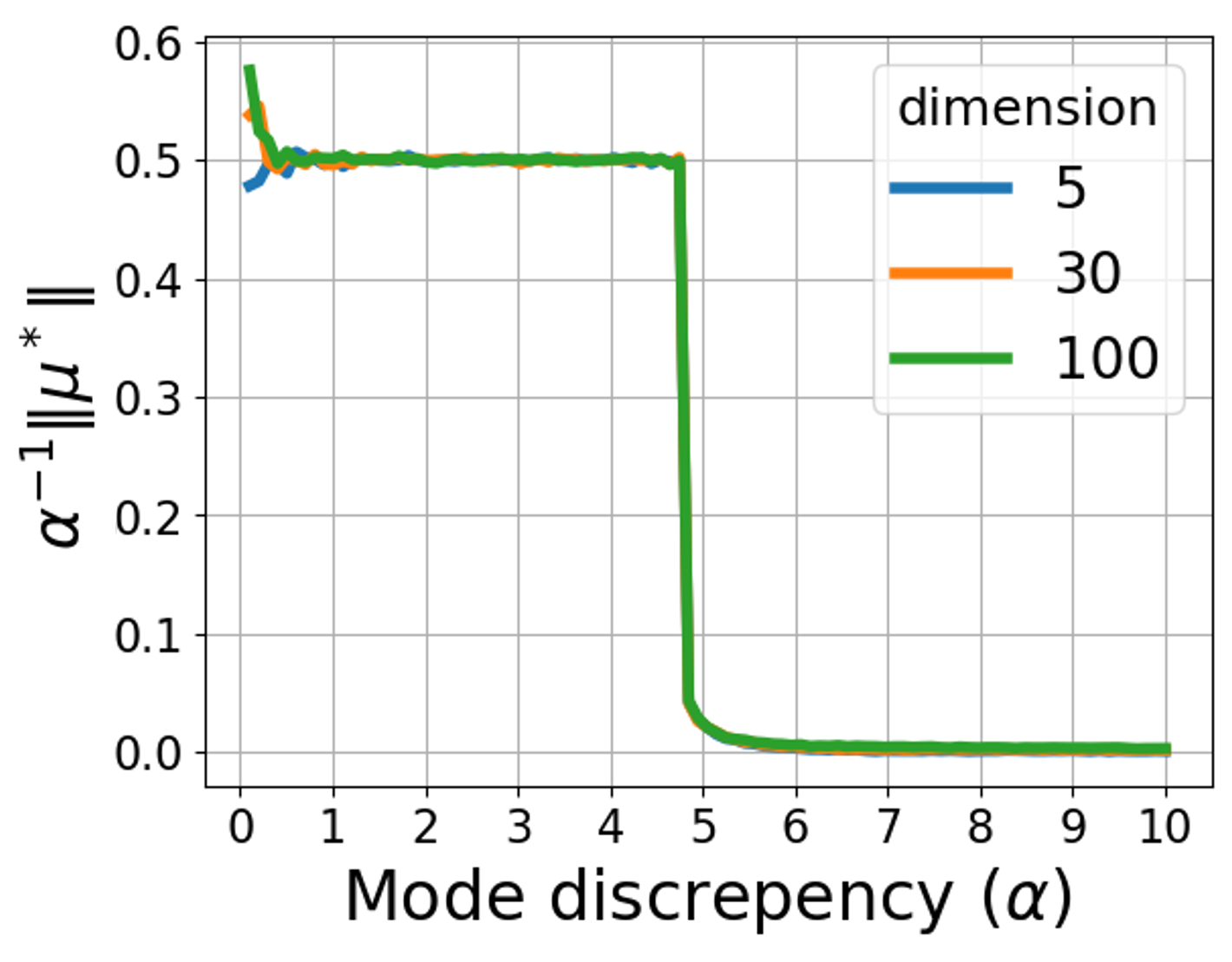} &
        \includegraphics[width=0.3\textwidth, trim={5 0 0 0},clip]{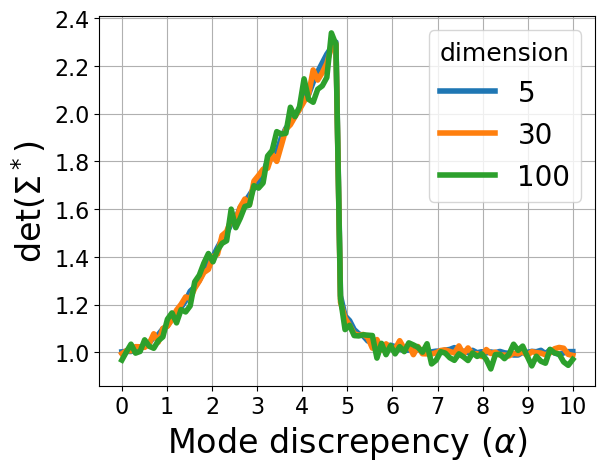} &
        \includegraphics[width=0.3\textwidth, trim={5 0 0 0},clip]{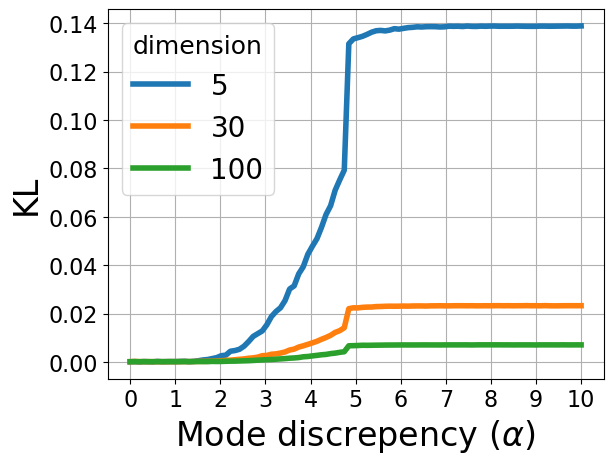} 
    \end{tabular}
    \caption{\textbf{behavior of the unimodal reverse KL minimizer.} We optimize $(\m^*, \s^*)$ to minimize $\KL(q_{\m, \s}(\x) \mid \mid p_\alpha(\x))$, where $q_{\m, \s}(\x) = \normdist(\x; \m, \s)$ is a Gaussian distribution and $p_\alpha(\x)$ is a mixture of two standard Gaussians with one component centered at $\mathbf{0}$ and another centered $\alpha$ away from $\mathbf{0}$. We plot the results of the optimization as a function of $\alpha$. From left to right, we plot: $\alpha^{-1}\lVert \m^* \rVert_2$ (quantifying the amount of interpolation between $p_\alpha(\x)$'s components), $\det(\s^*)$, and the optimal $\hat{\KL}$ reached by optimization.}
    \label{fig:revrese_kl}
\end{figure*}

\textbf{Interpolating equivalent modes of the posterior.} What are the effects of mode interpolation in the context of VI for BNNs? While interpolating posterior modes that arise from \textit{different} functions fitting the data might be desirable in some scenarios, in the case of \textit{equivalent} modes (i.e. ones that represent the same function) interpolation becomes detrimental. Interpolating equivalent modes causes the approximate posterior to assign a higher likelihood to weights that lie between multiple equivalent modes. These weights may correspond to less likely functions compared to those that are closer to a single mode. This means that sampling from the variational posterior results in less likely functions, underfitting the data and degrading predictive performance. This intuition is verified empirically in Section \ref{sec:experiments}. Additionally, as demonstrated in Figure \ref{fig:revrese_kl}, interpolating solutions have higher variance than each of the mixture components, suggesting uncertainty may be overestimated.

\textbf{Equivalent mode proximity.} The above discussion suggests that unimodal approximate distributions would tend to interpolate closely spaced equivalent modes in the posterior, which can lead to underfitting. This raises the question: \textit{are equivalent modes in the posterior close to one another?} The following theorem answers this question affirmatively for wide, single hidden layer MLPs.
\vspace{1mm}
\begin{theorem}\label{thm:mode_proximity}
    Given an MLP with a single hidden layer of width $d_h$, input dimension $d_i$ and output dimension $d_o$, for every weight configuration $\w \in \W$ there exists a non-trivial weight space permutation $g \in G$ such that
    \begin{equation*}
        \lVert \w - g \cdot \w \rVert_2 = \mathcal{O} \left(\frac{d_i + d_o}{\log(d_h)}\lVert \w \rVert_2 \right)
    \end{equation*}
\end{theorem}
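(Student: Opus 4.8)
The plan is to show that for a wide single-hidden-layer MLP, some neuron permutation moves $\w$ by only a small amount, and the mechanism is a pigeonhole/concentration argument on the hidden neurons. Write $\w = (\WW_1, \WW_2, \bb_1, \bb_2)$ with $\WW_1 \in \mathbb{R}^{d_h \times d_i}$, $\bb_1 \in \mathbb{R}^{d_h}$, $\WW_2 \in \mathbb{R}^{d_o \times d_h}$, $\bb_2 \in \mathbb{R}^{d_o}$, and for each hidden neuron $j \in \{1, \dots, d_h\}$ collect its ``signature'' $\mathbf{s}_j := \bigl(\, (\WW_1)_{j,:}\,,\, (\bb_1)_j\,,\, (\WW_2)_{:,j}\, \bigr) \in \mathbb{R}^{d_i + 1 + d_o}$. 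Since $G = S_{d_h}$ acts by permuting these signature vectors (the action on $\WW_1, \bb_1$ permutes rows, the action on $\WW_2$ permutes columns, and $\bb_2$ is untouched), for a transposition $g = (j\,k)$ we get $\lVert \w - g\cdot\w\rVert_2^2 = 2\,\lVert \mathbf{s}_j - \mathbf{s}_k\rVert_2^2$. So it suffices to find two hidden neurons whose signatures are close: $\min_{j \neq k} \lVert \mathbf{s}_j - \mathbf{s}_k \rVert_2 = \mathcal{O}\bigl(\sqrt{(d_i + d_o)/\log d_h}\,\lVert\w\rVert_2\bigr)$, and then $g = (j\,k)$ is the desired non-trivial permutation.

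The core step is a covering/packing bound. First note $\sum_{j=1}^{d_h} \lVert \mathbf{s}_j \rVert_2^2 = \lVert \WW_1 \rVert_F^2 + \lVert \bb_1 \rVert_2^2 + \lVert \WW_2 \rVert_F^2 \le \lVert \w \rVert_2^2$, so the signatures live (on average) inside a ball of radius $\approx \lVert\w\rVert_2/\sqrt{d_h}$; more carefully, at least half of them lie in a ball of radius $R := \sqrt{2}\,\lVert\w\rVert_2/\sqrt{d_h}$ by Markov. Now cover that ball in $\mathbb{R}^{D}$, $D := d_i + 1 + d_o$, by balls of radius $\varepsilon$: the number of cells needed is at most $(3R/\varepsilon)^D$. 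If this count is smaller than $d_h/2$, two of the $\ge d_h/2$ signatures in the big ball share a cell, giving a pair at distance $\le 2\varepsilon$. Solving $(3R/\varepsilon)^D < d_h/2$ for $\varepsilon$ yields $\varepsilon \approx 3R (d_h/2)^{-1/D} = 3R\exp(-\tfrac{\ln(d_h/2)}{D})$; the key is that $(d_h/2)^{-1/D}$, while not decaying polynomially, still gives room, and plugging $R \propto \lVert\w\rVert_2/\sqrt{d_h}$ one obtains a bound of the stated shape $\mathcal{O}\bigl(\tfrac{D}{\log d_h}\lVert\w\rVert_2\bigr)$ after taking the regime $D \ll \log d_h$ (equivalently $d_h$ large relative to $d_i + d_o$) and using $1 - e^{-x} \le x$. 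One should double-check the exact exponents: the clean way is to pick $\varepsilon$ so that the volume ratio $(R/\varepsilon)^D$ equals $cd_h$ and read off $\lVert\mathbf{s}_j - \mathbf{s}_k\rVert_2 \le 2\varepsilon$, then bound $\varepsilon^2 \lesssim R^2 (D/\log d_h)$ using $a^{2/D} = e^{(2/D)\ln a} \le 1 + \tfrac{2\ln a}{D - 2\ln a}$ or similar, which reproduces the $(d_i + d_o)/\log d_h$ factor.

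The main obstacle is getting the dependence on $d_i + d_o$ and $\log d_h$ to come out exactly as claimed rather than, say, with a square root or an extra logarithm — this hinges on how the covering number $(3R/\varepsilon)^D$ is inverted and on whether one measures the final quantity as $\lVert \w - g\cdot\w\rVert_2$ or its square. A secondary subtlety is the role of $\bb_2$ and of $\lVert \WW_1\rVert$ versus $\lVert \WW_1\rVert_F$: since $\bb_2$ is invariant it only helps (it contributes to $\lVert\w\rVert_2$ in the denominator-free sense but not to the signature spread), and one must be careful that $\lVert\w\rVert_2$ in the theorem is the full parameter norm including $\bb_2$, so the inequality $\sum_j \lVert\mathbf{s}_j\rVert_2^2 \le \lVert\w\rVert_2^2$ is safe. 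Handling the edge case where many signatures coincide exactly (then the bound is trivially $0$) and confirming the permutation produced is genuinely non-trivial (which requires $d_h \ge 2$, automatic in the ``wide'' regime) rounds out the argument.
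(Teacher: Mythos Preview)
Your approach is correct and takes a genuinely different route from the paper. The paper argues directly in the full weight space $\W$ of dimension $D' = d_i d_h + d_h + d_h d_o + d_o$ and uses the entire orbit: assuming all $|G|$ points $g\cdot\w$ are pairwise more than $2R$ apart, the disjoint balls $B(g\cdot\w, R)$ all lie inside $B(\mathbf{0}, \lVert\w\rVert_2 + R)$, so $|G|\,R^{D'} \le (\lVert\w\rVert_2 + R)^{D'}$, which after taking logarithms and using $\log(1+x)<x$ forces $R < \tfrac{D'}{\log|G|}\lVert\w\rVert_2$; then $\log(d_h!) \gtrsim d_h\log d_h$ makes $D'/\log|G|$ of order $(d_i+d_o)/\log d_h$. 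Your reduction to the $(d_i{+}1{+}d_o)$-dimensional signature space and to transpositions is more hands-on and in fact yields a \emph{stronger} bound: since $(d_h/2)^{-1/D} \le 1$, your covering argument already gives $\lVert\w - g\cdot\w\rVert_2 \lesssim \lVert\w\rVert_2/\sqrt{d_h}$, and $1/\sqrt{d_h} \le 2/\log d_h \le (d_i+d_o)/\log d_h$ for all $d_h \ge 1$, so the big-$\mathcal{O}$ statement follows immediately. Your stated ``main obstacle''---matching the exact $(d_i+d_o)/\log d_h$ rate---is therefore a non-issue; the algebraic step you sketch (bounding $(d_h/2)^{-2/D} \lesssim D/\log d_h$, which does hold via $te^{-t}\le 1/e$) still leaves an extra $1/\sqrt{d_h}$ factor rather than reproducing the paper's rate, so you should simply observe that you have proved something strictly stronger and stop there. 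What the paper's argument buys is generality (it works verbatim for any finite group acting orthogonally on any parameter space), whereas yours exploits the specific layered structure of the MLP action to sharpen the dependence on $d_h$ from inverse-logarithmic to inverse-polynomial.
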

See proof in Appendix \ref{apx:mode_proximity}. As a result, in wide neural networks (e.g. fixing $d_i, d_o$ and taking $d_h \to \infty$), the distance between a mode $\w^*$ and the closest equivalent mode, relative to $\lVert \w^* \rVert_2$, goes to $0$. This suggests higher relative mode proximity in wide networks.
\vspace{-2mm}

\section{Symmetrization of the Variational Posterior} \label{sec:symmetrization}
\looseness=-1
To address the challenge of approximating a permutation invariant posterior with a large number of equivalent modes, we propose using a $G$-invariant variational posterior. There are several approaches for designing invariant models, including the use of \textit{invariant architectures} \cite{Cohen2016GroupEC, deepsets, Hartford2018DeepMO, Maron2018InvariantAE, DWS} and transforming the input to a \textit{canonical form} \cite{kaba2022equivariance, Tai2019EquivariantTN, kofinas2021rototranslated}. In this work, we opt for a simple, architecture-agnostic strategy: averaging a non-invariant model over the orbits of the symmetry group \cite{Puny2021FrameAF, Yarotsky2018UniversalAO, Benton2020LearningII}. We call this process \textit{$G$-symmetrization}. Concretely, given an approximate posterior family $q_\theta(\w)$, its $G$-symmetrization is
\begin{equation}\label{eq:symmetrization}
    q_\theta^G (\w) = \frac{1}{|G|} \sum_{g \in G} q_\theta(g^{-1} \cdot \w)
\end{equation}
\looseness=-1
See illustration in Figure \ref{fig:symmterization_of_the_posterior}. $q_\theta^G(\w)$ can be viewed as a coupling of two random variables, $\g \in G$ and $\w \in \W$, with 
\begin{equation*}
q_\theta^G(\g, \w) = q_\theta^G(\w \mid \g) \cdot q_\theta^G(\g) = q_\theta(\g^{-1} \cdot \w) \cdot \frac{1}{|G|}
\end{equation*}
\looseness=-1
and $q_\theta^G(\w) = \E_{g \sim q_\theta^G(g)} (q_\theta^G(\w \mid g))$. Additionally, since the representation of $G$ is orthonormal, $g_\# q_\theta(\w) = q_\theta(g^{-1} \cdot \w)$ (see Section \ref{sec:perlimiaries}). Thus, Equation \eqref{eq:symmetrization} can be written as $q_\theta^G(\w) = \E_{\g \sim \text{uniform}(G)} \left( \g_\# q_\theta (\w)\right)$. For a Gaussian variational posterior $q_{\m, \s}(\w) = \normdist(\w ; \m, \s)$ the symmetrization takes the form of a symmetric Gaussian mixture $q_{\m, \s}^G(\w) = \frac{1}{|G|} \sum_{g \in G} \normdist(\w ; g \cdot \m, g \cdot \s)$, where $g \cdot \s := \rho(g)^\top \s \rho(g)$. In the case of mean-field VI, the covariance matrix is diagonal $\s = \diag(\boldsymbol{\sigma})$, so this becomes $g \cdot \diag(\boldsymbol{\sigma}) = \diag(g \cdot \boldsymbol{\sigma})$.

\looseness=-1
Unless $q_\theta(\w)$ is itself $G$-invariant (in which case $q_\theta^G \equiv q_\theta$), its symmetrization $q_\theta^G(\w)$ has a \textit{strictly better} fit to the posterior (see Corollary \ref{cor:posterior_fit}). However, sampling a model from  $q_\theta^G(\w)$ (i.e. $\f^{\w}(\x)$, $\w \sim q_\theta^G(\w)$) is equivalent to sampling a model from $q_\theta(\w)$. This is because drawing a sample from $q_\theta^G(\w)$ is equivalent to sampling $\w \sim q_\theta(\w)$ and transforming it by a random $g \sim \text{uniform}(G)$. Since $\f^{\w} \equiv \f^{g \cdot \w}$, the resulting function space distributions are the same (for a more elborate discussion, see appendix \ref{apx:sampling}). Therefore, the only difference in performing VI using $q_\theta^G(\w)$ lies in training for the optimal variational parameters $\theta$ by maximizing the ELBO objective associated with $q_\theta^G(\w)$. We denote $q_\theta^G(\w)$'s ELBO by $\loss_\VI^G(\theta)$ and refer to it as the \textit{symmetrized ELBO}. Importantly, even though we train via $\loss_\VI^G(\theta)$, we can still use the original distribution $q_\theta(\w)$ for making predictions.

\textbf{Structure of the symmetrized ELBO.} While the vast number of weight space permutations induced by the symmetry group $G$ might initially suggest that training using the fully symmetrized ELBO $\loss_\VI^G(\theta)$ is intractable, the inherent symmetry of $q_\theta^G(\w)$ offers a key advantage and simplifies the structure of  $\loss_\VI^G(\theta)$.

\begin{theorem}
\label{thm:elbo_correction}
    Given a variational posterior $q_\theta(\w)$
    \begin{equation}\label{eq:elbo_correction}
    \begin{split}
        \loss_\VI^G(\theta) &= \loss_\VI(\theta) + H(q_\theta^G(\w)) - H(q_\theta(\w)) \\
        &= \loss_\VI(\theta) + I(g; \w)
    \end{split}
    \end{equation}
    where $H(q_\theta(\w))$ is the entropy of $q_\theta(\w)$, $H(q_\theta^G(\w))$ is the entropy of $q_\theta^G(\w)$, and $I(g; \w)$ is the mutual information of $g$ and $\w$ under $q_\theta^G(g, \w)$.  
\end{theorem}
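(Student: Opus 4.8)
\emph{Proof proposal.} The plan is to prove the two claimed identities in Equation \eqref{eq:elbo_correction} separately, starting from the definitions of $\loss_\VI^G(\theta)$ and $\loss_\VI(\theta)$ as the ELBOs associated with $q_\theta^G$ and $q_\theta$ respectively. The only facts used are that the representation of $G$ is orthonormal (so every change of variables $\w \mapsto g \cdot \w$ has unit Jacobian determinant), that the likelihood $p(\Y \mid \X, \w)$ is $G$-invariant, and that the prior $p(\w)$ is $G$-invariant (both from Section \ref{sec:perlimiaries}). I also use throughout that $q_\theta$ admits a density.

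First I would show the expected log-likelihood term is unchanged by symmetrization. Writing $q_\theta^G(\w) = \tfrac{1}{|G|}\sum_{g \in G} q_\theta(g^{-1} \cdot \w)$ and applying the substitution $\w \mapsto g \cdot \w$ in each summand gives $\E_{\w \sim q_\theta^G}\!\left[\log p(\Y \mid \X, \w)\right] = \tfrac{1}{|G|}\sum_{g \in G} \E_{\w \sim q_\theta}\!\left[\log p(\Y \mid \X, g \cdot \w)\right]$, and $G$-invariance of the likelihood collapses the average to $\E_{\w \sim q_\theta}\!\left[\log p(\Y \mid \X, \w)\right]$. Next I would treat the prior KL term by decomposing $\KL(q \mid\mid p) = -H(q) - \E_{\w \sim q}[\log p(\w)]$ for both $q = q_\theta$ and $q = q_\theta^G$. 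The cross-entropy $\E_{\w \sim q_\theta^G}[\log p(\w)]$ equals $\E_{\w \sim q_\theta}[\log p(\w)]$ by the same change-of-variables argument, now invoking $G$-invariance of the prior. Subtracting yields $\KL(q_\theta^G \mid\mid p) - \KL(q_\theta \mid\mid p) = H(q_\theta(\w)) - H(q_\theta^G(\w))$, and combining with the likelihood computation gives the first identity $\loss_\VI^G(\theta) = \loss_\VI(\theta) + H(q_\theta^G(\w)) - H(q_\theta(\w))$.

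Finally I would identify $H(q_\theta^G(\w)) - H(q_\theta(\w))$ with $I(g; \w)$ under the coupling $q_\theta^G(g, \w) = \tfrac{1}{|G|} q_\theta(g^{-1} \cdot \w)$. Since $g$ is discrete, I use $I(g; \w) = H(\w) - H(\w \mid g)$ with $H(\w) = H(q_\theta^G(\w))$ and $H(\w \mid g) = \E_{g \sim \text{uniform}(G)}\!\left[H(q_\theta^G(\w \mid g))\right]$. But $q_\theta^G(\w \mid g) = q_\theta(g^{-1} \cdot \w) = g_\# q_\theta(\w)$ is the pushforward of $q_\theta$ by an orthonormal map, and such pushforwards preserve differential entropy (again change of variables with unit Jacobian), so $H(q_\theta^G(\w \mid g)) = H(q_\theta(\w))$ for every $g$. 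Hence $H(\w \mid g) = H(q_\theta(\w))$ and $I(g;\w) = H(q_\theta^G(\w)) - H(q_\theta(\w))$, which is the second identity.

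I do not expect a genuine obstacle: the result is essentially bookkeeping built on the invariance of both ELBO terms under symmetrization together with the unit-Jacobian property of orthonormal maps. The one point requiring care is the mixed discrete/continuous nature of $(g, \w)$: the joint "entropy" $H(g, \w)$ is not a pure differential entropy, so $I(g;\w)$ should be defined via $I(g;\w) = H(\w) - H(\w \mid g)$ (equivalently $\log|G| - H(g \mid \w)$), which is finite and nonnegative. This nonnegativity in particular recovers $H(q_\theta^G(\w)) \geq H(q_\theta(\w))$, the entropy inequality underlying Corollary \ref{cor:posterior_fit}, with equality exactly when $q_\theta$ is already $G$-invariant.
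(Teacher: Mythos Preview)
Your proposal is correct and follows essentially the same route as the paper's proof: both show the expected log-likelihood and the prior cross-entropy are unchanged under symmetrization via the orthonormal change of variables and the $G$-invariance of likelihood and prior, leaving only the entropy difference, and both identify that difference with $I(g;\w)$ by noting that $H(q_\theta^G(\w\mid g)) = H(g_\# q_\theta) = H(q_\theta)$ since orthonormal pushforwards preserve differential entropy. The paper isolates this last fact as a separate lemma, and your explicit remark about the mixed discrete/continuous nature of $(g,\w)$ is a welcome clarification the paper leaves implicit.
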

See proof in Appendix \ref{apx:elbo_correction}. Theorem \ref{thm:elbo_correction} serves two purposes. In the next section, we use Equation \eqref{eq:elbo_correction} to evaluate $\loss_\VI^G(\theta)$ in practice by estimating the mutual information $I(g; \w)$. In addition, Theorem \ref{thm:elbo_correction} offers theoretical insight, detailed below, on the posterior fit of the symmetrized variational approximation.

\vspace{1mm}
\begin{corollary}\label{cor:posterior_fit}
    \looseness=-1
    For all variational parameter configurations $\theta$, $\loss_\VI^G(\theta) \geq \loss_\VI(\theta)$, with equality iff $q_\theta(\w)$ is $G$-invariant.
\end{corollary}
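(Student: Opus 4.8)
The plan is to derive the corollary directly from Theorem~\ref{thm:elbo_correction}, so the main work is already done; what remains is an information-theoretic non-negativity argument together with an equality analysis. From Equation~\eqref{eq:elbo_correction}, we have $\loss_\VI^G(\theta) - \loss_\VI(\theta) = I(g; \w)$, where the mutual information is taken under the joint $q_\theta^G(g,\w) = \frac{1}{|G|} q_\theta(g^{-1}\cdot\w)$. Since mutual information is always non-negative (it is the KL divergence between the joint and the product of marginals, $I(g;\w) = \KL(q_\theta^G(g,\w) \mid\mid q_\theta^G(g)\, q_\theta^G(\w)) \geq 0$), the inequality $\loss_\VI^G(\theta) \geq \loss_\VI(\theta)$ follows immediately for every $\theta$.

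Equivalently, and perhaps more transparently, I would phrase the non-negativity via the entropy form: $I(g;\w) = H(q_\theta^G(\w)) - H(q_\theta^G(\w \mid g)) = H(q_\theta^G(\w)) - H(q_\theta(\w))$, where the last equality uses that $q_\theta^G(\w \mid g) = q_\theta(g^{-1}\cdot\w)$ is a measure-preserving (orthonormal) pushforward of $q_\theta$ and hence has the same entropy as $q_\theta$ for every $g$. Then $\loss_\VI^G(\theta) \geq \loss_\VI(\theta)$ is just the statement that conditioning does not increase entropy, $H(q_\theta^G(\w)) \geq H(q_\theta^G(\w \mid g))$ — a standard fact. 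I would pick whichever of these two framings is most consistent with how Appendix~\ref{apx:elbo_correction} proves Theorem~\ref{thm:elbo_correction}.

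For the equality case, I would use the standard characterization that $I(g;\w) = 0$ if and only if $g$ and $\w$ are independent under $q_\theta^G(g,\w)$, i.e. $q_\theta^G(\w\mid g) = q_\theta^G(\w)$ for (almost) all $g$. Since $q_\theta^G(\w \mid g) = g_\# q_\theta(\w)$, independence means $g_\# q_\theta = q_\theta^G$ for all $g \in G$; averaging over $g$ (the right-hand side is already the average) this forces $g_\# q_\theta = q_\theta$ for all $g$, which is exactly $G$-invariance of $q_\theta$. Conversely, if $q_\theta$ is $G$-invariant then $q_\theta^G = q_\theta$ and trivially $I(g;\w)=0$, giving equality. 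The one subtlety to handle carefully is the ``almost all $g$'' versus ``all $g$'' issue: since $G$ is finite and $q_\theta^G(g)$ is uniform with full support, ``almost all'' is literally ``all,'' so there is no measure-zero exception to worry about — I would note this explicitly to keep the argument clean.

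The main obstacle is not really a technical one but a bookkeeping one: making sure the equality analysis of $I(g;\w)=0$ is stated at the level of distributions (not just densities) so that the pushforward identity $q_\theta^G(\w\mid g) = q_\theta(g^{-1}\cdot\w)$ and the deduction $g_\#q_\theta = q_\theta$ are rigorous. Everything else is an invocation of non-negativity of mutual information and the finiteness of $G$. I would keep the proof to a few lines, citing Theorem~\ref{thm:elbo_correction} for the identity and standard information theory for non-negativity and its equality condition.
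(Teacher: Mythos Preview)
Your proposal is correct and takes essentially the same approach as the paper: invoke Theorem~\ref{thm:elbo_correction}, use non-negativity of mutual information for the inequality, and deduce $G$-invariance from the independence condition $I(g;\w)=0$. The paper's equality step is phrased as $q_\theta^G(\w\mid g) = q_\theta^G(\w\mid e)$ (specializing to the identity) rather than going through the marginal as you do, but this is a cosmetic difference.
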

\begin{proof}[Proof of Corollary \ref{cor:posterior_fit}]
    Since mutual information is non-negative, 
    \begin{equation}\label{eq:ineq}
        \loss_\VI^G(\theta) = \loss_\VI(\theta) + I(g; \w) \geq \loss_\VI(\theta)
    \end{equation}
    Equality in \eqref{eq:ineq} means that $I(g; \w) = 0$, and therefore that $g$ and $\w$ are independent under $q_\theta^G(g, \w)$. This, in turn, implies that $\forall g \in G, \, q_\theta^G(\w \mid g) = q_\theta^G(\w)$, and in particular
    \begin{equation}\label{eq:group_invariance}
        \forall g \in G, \, q_\theta^G(\w \mid g) = q_\theta^G(\w) = q_\theta^G(\w \mid e)
    \end{equation}
    where $e \in G$ is the identity element. By definition, $q_\theta^G(\w \mid g) = q_\theta(g^{-1} \cdot \w)$, so \eqref{eq:group_invariance} becomes $\forall g \in G, \, q_\theta(g^{-1} \cdot \w) = q_\theta(e^{-1} \cdot \w) = q_\theta(\w)$, which means $q_\theta(\w)$ is $G$-invariant. In the other direction, if $q_\theta(\w)$ is $G$-invariant, its symmetrization is trivial $q_\theta^G \equiv q_\theta$, thus $\loss_\VI^G(\theta) = \loss_\VI(\theta)$.
\end{proof}
In other words, unless $q_\theta(\w)$ is $G$-invariant to begin with, there is a strict gap between $\loss_\VI^G(\theta)$ and $\loss_\VI(\theta)$, implying $q_\theta^G(\w)$ has a strictly better fit to the true posterior, as mentioned previously.

\looseness=-1
Note that given optimums $\theta^* \in \text{argmax}_{\theta} \loss_\VI(\theta)$ and $\theta_G^* \in \text{argmax}_{\theta} \loss_\VI^G(\theta)$ we might have $\loss_\VI(\theta^*) \geq \loss_\VI(\theta_G^*)$, but we'll always have $\loss_\VI^G(\theta_G^*) \geq \loss_\VI^G(\theta^*) > \loss_\VI(\theta^*)$. Thus, symmetrized VI always results in a better posterior, further motivating the optimization of $\loss_\VI^G(\theta)$. To tractably do so, Theorem \ref{thm:elbo_correction} suggests focusing on the mutual information term $I(g; \w) = H(q_\theta^G(\w)) - H(q_\theta(\w))$.

\vspace{1mm}
\looseness=-1
\textbf{Estimating the symmetrized ELBO.} As is common in VI, we assume both $\loss_{\VI}(\theta)$ and $H(q_\theta(\w))$ are tractable to evaluate (e.g. in the case of Gaussian prior and variational posterior). Therefore, in order to evaluate $\loss_\VI^G(\theta)$, it is enough to evaluate the entropy of the symmetrized variational posterior $H(q_\theta^G(\w))$. The entropy of mixture models typically lacks a closed-form expression, which means we need to resort to an estimator. However, standard estimators for mixture model entropy \cite{huber, Kolchinsky2017EstimatingME} are not applicable in this case due to the large number of mixture components in $q_\theta^G(\w)$. Fortunately, we can leverage the symmetric structure of $q_\theta^G(\w)$ to construct an efficient estimator for the entire mutual information term, and $H(q_\theta^G(\w))$ in particular.

\vspace{2mm}
\looseness=-1
Mutual information estimation is core to many machine learning tasks such as representation learning \cite{Oord2018RepresentationLW, Wu2020OnMI}, reinforcement learning \cite{Nachum2018NearOptimalRL, Shu2020PredictiveCF}, and more. In order to estimate $I(g; \w)$, we leverage the InfoNCE bound \cite{Oord2018RepresentationLW, Poole2019OnVB}, which is extensively used in contrastive learning, and has well-understood bias and variance \cite{Song2020Understanding}. Using InfoNCE, and leveraging the symmetric structure of $q_\theta^G(\w)$ and the identity $H(q_\theta^G(\w)) - H(q_\theta(\w)) = I(\g; \w)$, we obtain the following estimators for $H(q_\theta^G(\w))$.
\begin{theorem}\label{thm:entropy_bound}
    Let $q_\theta(\w)$ be a variational posterior. The estimator
    \thinmuskip=2mu \medmuskip=2mu
    \begin{equation*}
        H^K(\theta) = \E \left( -\log\left(\frac{1}{K}\left(q_\theta(\w) + \sum_{j = 1}^{K-1} q_\theta(\g_j^{-1} \cdot \w) \right)\right)\right)
    \end{equation*}
    where the expectation is over $\w \sim q_\theta(\w)$ and $\g_1, \dots, g_{K - 1} \sim \text{uniform}(G)$ i.i.d. satisfies
    \begin{equation*}
        H^K(\theta) \leq H(q_\theta^G(\w)), \, \lim_{K \to \infty} H^K(\theta) = H(q_\theta^G(\w))
    \end{equation*}
\end{theorem}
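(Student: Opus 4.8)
The plan is to recognize $H^K(\theta) - H(q_\theta(\w))$ as a lower bound on the mutual information $I(\g;\w)$ under $q_\theta^G(\g,\w)$, obtained from the InfoNCE bound with a judiciously chosen critic, and then to combine this with Theorem~\ref{thm:elbo_correction}. Writing $q_\theta^G(\w \mid \g) = q_\theta(\g^{-1}\cdot\w)$ for the conditional and $q_\theta^G(\g) = 1/|G|$ for the marginal over the finite group, the InfoNCE inequality states that for any positive critic $f(\w,\g)$,
\begin{equation*}
I(\g;\w) \;\ge\; \E\!\left[\log\frac{f(\w,\g_1)}{\tfrac1K\sum_{i=1}^K f(\w,\g_i)}\right],
\end{equation*}
where $(\w,\g_1)\sim q_\theta^G$ and $\g_2,\dots,\g_K\sim\mathrm{uniform}(G)$ independently. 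I would instantiate this with the critic $f(\w,\g) = q_\theta(\g^{-1}\cdot\w) = q_\theta^G(\w\mid\g)$. Since this equals the optimal density-ratio critic $q_\theta^G(\w\mid\g)/q_\theta^G(\w)$ multiplied by a factor $q_\theta^G(\w)$ depending on $\w$ alone, that factor cancels between numerator and denominator, so the bound coincides with (and is therefore as tight as) the one obtained with the optimal critic.

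The second step is to massage the right-hand side into the stated form of $H^K(\theta)$. Sampling the positive pair $(\w,\g_1)$ amounts to drawing $\g_1$ uniformly and setting $\w=\g_1\cdot\w_0$ with $\w_0\sim q_\theta$, so that $f(\w,\g_1)=q_\theta(\w_0)$ and $f(\w,\g_i)=q_\theta(\g_i^{-1}\g_1\cdot\w_0)$. Conditioning on $(\w_0,\g_1)$, the elements $\g_i^{-1}\g_1$ for $i=2,\dots,K$ are i.i.d.\ uniform on $G$ (right multiplication by a fixed element is a bijection of the finite group $G$, and distinct $\g_i$ are independent), so the conditional expectation does not depend on $\g_1$; relabelling these $K-1$ samples as $\g_1,\dots,\g_{K-1}$ and $\w_0$ as $\w$ turns the bound into
\begin{equation*}
I(\g;\w)\;\ge\;\E\!\left[\log q_\theta(\w)\right]\;+\;H^K(\theta)\;=\;-H(q_\theta(\w))+H^K(\theta).
\end{equation*}
Rearranging and invoking Theorem~\ref{thm:elbo_correction} ($H(q_\theta^G(\w)) = H(q_\theta(\w)) + I(\g;\w)$) yields $H^K(\theta)\le H(q_\theta^G(\w))$ for every $K$.

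For the limit I would argue separately for $\limsup$ and $\liminf$. The bound just proven already gives $\limsup_{K}H^K(\theta)\le H(q_\theta^G(\w))$. For the reverse, fix $\w$ with $q_\theta^G(\w)>0$, which holds $q_\theta$-a.s.\ because $q_\theta(\w)\le|G|\,q_\theta^G(\w)$; the argument of the logarithm in $H^K$ equals $\tfrac1K q_\theta(\w)$ plus an empirical average of i.i.d.\ bounded random variables with mean $q_\theta^G(\w)$, so by the strong law of large numbers it converges a.s.\ to $q_\theta^G(\w)$, hence $-\log(\cdot)\to-\log q_\theta^G(\w)$ a.s. The same inequality $q_\theta\le|G|\,q_\theta^G$ shows that this argument is at most $2|G|\,q_\theta^G(\w)$, so $-\log(\cdot)\ge-\log(2|G|)-\log q_\theta^G(\w)$, which is $q_\theta$-integrable since $\KL(q_\theta(\w)\mid\mid q_\theta^G(\w))\le\log|G|$ forces $-\E_{q_\theta}[\log q_\theta^G(\w)]\le H(q_\theta(\w))+\log|G|<\infty$. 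Fatou's lemma (for sequences bounded below by an integrable function) then gives $\liminf_{K}H^K(\theta)\ge-\E_{\w\sim q_\theta}[\log q_\theta^G(\w)]$; finally, because $\log q_\theta^G$ is $G$-invariant and $q_\theta^G$ is the $G$-average of $q_\theta$ under a norm-preserving (unit-Jacobian) action, a change of variables collapses $-\E_{\w\sim q_\theta}[\log q_\theta^G(\w)] = -\E_{\w\sim q_\theta^G}[\log q_\theta^G(\w)] = H(q_\theta^G(\w))$, closing the sandwich.

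The main obstacle is the first step: the InfoNCE inequality is not a direct consequence of Jensen applied to $-\log$, since the naive Jensen bound points the wrong way. The standard route rewrites the gap $I(\g;\w)-H^K(\theta)+H(q_\theta(\w))$, via the change of measure $q_\theta^G(\w,\g_1)=q_\theta^G(\w)q_\theta^G(\g_1)f^{*}(\w,\g_1)$ and the exchangeability of the $K$ group samples, as $\E[\bar f\log\bar f]$ with $\bar f=\tfrac1K\sum_i f^{*}(\w,\g_i)$ and $\E[\bar f]=1$, which is non-negative by convexity of $t\mapsto t\log t$. A secondary subtlety, flagged above, is that the raw Fatou bound produces the cross-entropy $-\E_{q_\theta}[\log q_\theta^G]$ rather than $H(q_\theta^G)$; the two agree only because of the permutation invariance of $q_\theta^G$, so that identity must be invoked explicitly at the end.
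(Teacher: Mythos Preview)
Your argument is correct and follows the same skeleton as the paper: both proofs invoke the InfoNCE lower bound with the conditional $q_\theta^G(\w\mid\g)=q_\theta(\g^{-1}\cdot\w)$ as critic, and both exploit the group structure (the observation that $\g_j^{-1}\g_1$ is uniform on $G$ given $\g_1$) to collapse the $K$-pair InfoNCE expression to the single-sample form stated in the theorem. Where you differ is in the treatment of the limit $K\to\infty$: the paper simply cites an external result (Chen et al.) for the asymptotic tightness of InfoNCE, whereas you give a self-contained argument via the strong law of large numbers and Fatou's lemma, together with the identity $-\E_{q_\theta}[\log q_\theta^G]=H(q_\theta^G)$ obtained from the $G$-invariance of $\log q_\theta^G$ under a unit-Jacobian action. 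You also sketch a proof of the InfoNCE inequality itself (the $\E[\bar f\log\bar f]\ge 0$ argument under the product-of-marginals measure), which the paper again takes as a black box from the literature. Your route is therefore more elementary and self-contained, at the cost of needing the mild regularity assumption $H(q_\theta)<\infty$ to push the integrability through Fatou; the paper's route is shorter but defers both the inequality and its tightness to citations.
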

\vspace{2mm}
See proof in Appendix \ref{apx:mi_estimation}. The symmetric structure of $q_\theta^G(\w)$ allows us to simplify the original InfoNCE estimator, reducing its computational complexity from $\mathcal{O}(K^2)$ to $\mathcal{O}(K)$. Since $H^K(\theta)$ becomes unbiased as $K \to \infty$, we can trade-off the number of samples, $K$, and the bias in our estimator. We therefore estimate $\loss_\VI^G(\theta)$ as
\begin{equation*}
    \loss_\VI^K(\theta) = \loss_\VI(\theta) - H(q_\theta(\w)) + H^K(\theta)
\end{equation*}
Combining Theorems \ref{thm:elbo_correction} and \ref{thm:entropy_bound} we get $\loss_\VI^K(\theta) \leq \loss_\VI^G(\theta)$ and $\lim_{K \to \infty} \loss_\VI^K(\theta) = \loss_\VI^G(\theta)$.

\vspace{2mm}
\textbf{Optimizing the symmetrized ELBO.} We use Monte Carlo estimation to approximate both the expected log-likelihood term and the entropy estimator $H^K(\theta)$ within $\loss_\VI^K(\theta)$. We assume closed-form expressions for $\KL(q_\theta(\w) \mid \mid p(\w))$ and $H(q_\theta(\w))$, which is common in many VI setups. Given a mini-batch of $M$ data points $\mathcal{B} = \left(\X_{\mathcal{B}}, \Y_{\mathcal{B}}\right)$ drawn from a dataset of size $N$, we sample $S$ weight configurations $\w_1, \dots, \w_{S} \sim q_\theta(\w)$ and use them to estimate $\hat{\loss}_\VI(\theta)$ as
\begin{equation*}
     \frac{N}{M} \frac{1}{S}\sum_{i = 1}^{S} \log(p(\Y_{\mathcal{B}} \mid \X_{\mathcal{B}}, \w_i)) + \KL(q_\theta(\w) \mid \mid p(\w))
\end{equation*}
To estimate $H^K(\theta)$, we proceed by sampling $K -1$ random weight space permutations $g_1, \dots, g_{K-1} \sim \text{uniform}(G)$, computing $\hat{H}^K(\theta)$ as
\begin{equation*}
     -\frac{1}{S}\sum_{i = 1}^S \log\left(\frac{1}{K}\left(q_\theta(\w_i) + \sum_{j = 1}^{K-1} q_\theta(g_j^{-1} \cdot \w_i) \right)\right)
\end{equation*}
Combining both estimators, we obtain the final estimator for the symmetrized ELBO objective:
\begin{equation*}
    \hat{\loss}_\VI^K(\theta) = \hat{\loss}_\VI(\theta) -H(q_\theta(\w)) +\hat{H}^K(\theta)
\end{equation*}
We optimize $\hat{\loss}_\VI^K(\theta)$ via gradient ascent, backpropagating gradients through the Monte Carlo estimators using the reparametrization trick \cite{Kingma2013StochasticGV}.

\begin{figure}[t]
\centering
  \begin{subfigure}[t]{\linewidth}
    \centering
    \caption{Ground truth posterior}
    \begin{tabular}{cccc}
      \includegraphics[width=0.2\textwidth, trim={100 30 100 0},clip]{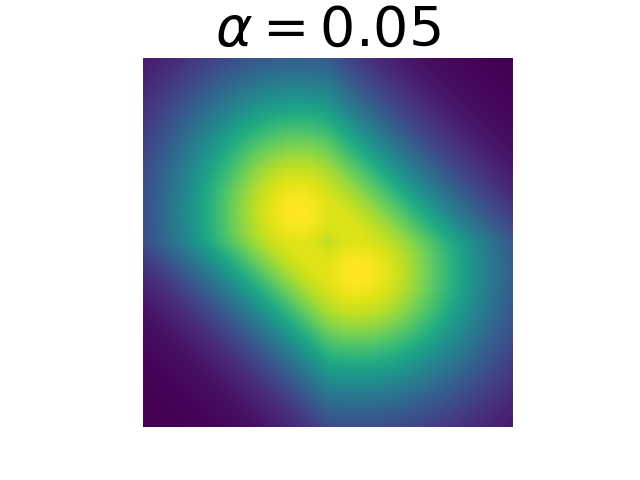} &  
      \includegraphics[width=0.2\textwidth, trim={100 30 100 0},clip]{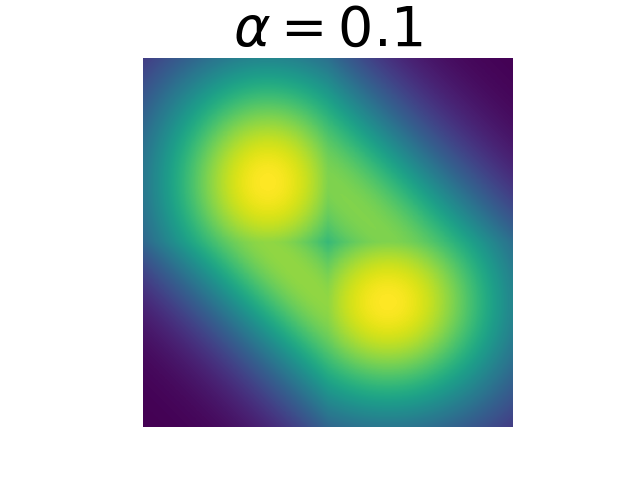} &  
      \includegraphics[width=0.2\textwidth, trim={100 30 100 0},clip]{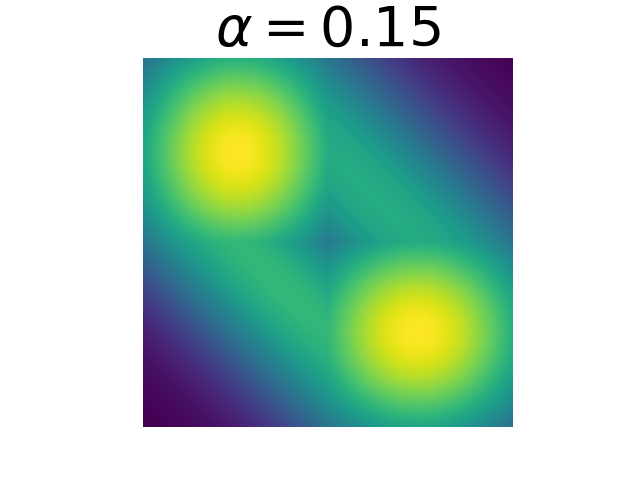} &
      \includegraphics[width=0.2\textwidth, trim={100 30 100 0},clip]{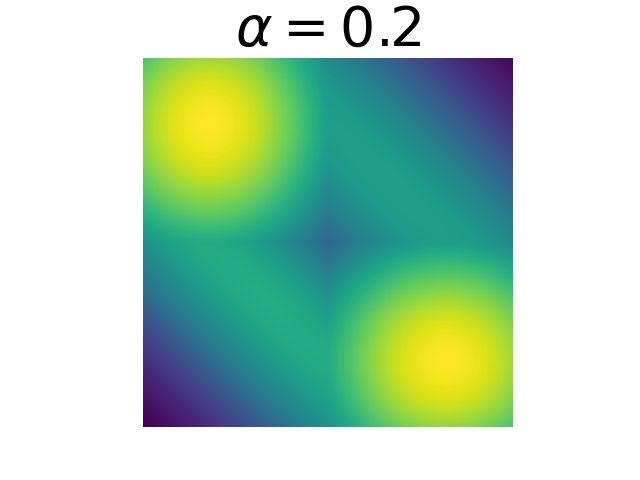}
    \end{tabular}
    \label{fig:exact_posterior}
  \end{subfigure}
  \hfill
  \begin{subfigure}[b]{\linewidth}
    \centering
    \caption{Mean-field VI with a Gaussian approximate posterior}
    \begin{tabular}{cccc}
      \includegraphics[width=0.2\linewidth, trim={100 30 100 0},clip]{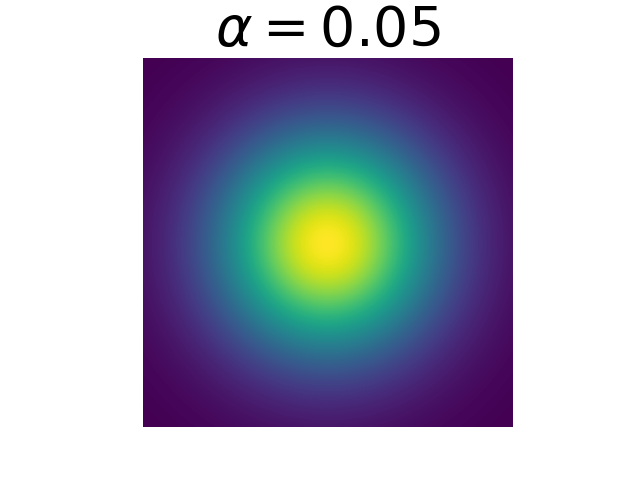} &  
      \includegraphics[width=0.2\linewidth, trim={100 30 100 0},clip]{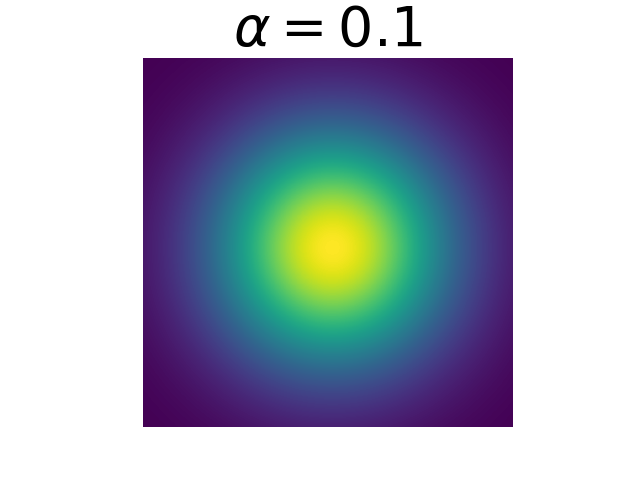} &  
      \includegraphics[width=0.2\linewidth, trim={100 30 100 0},clip]{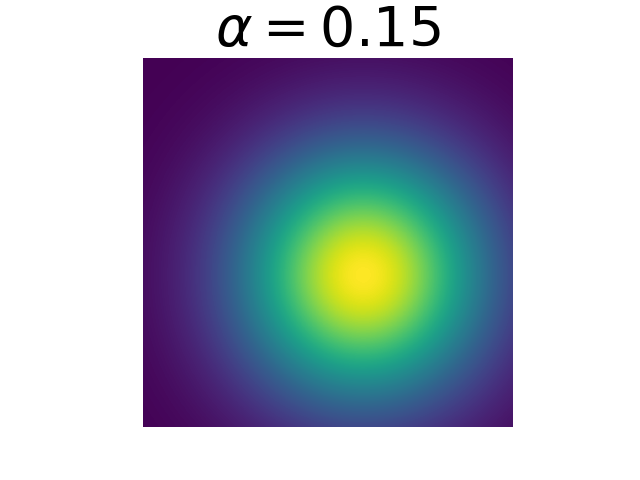} &
      \includegraphics[width=0.2\linewidth, trim={100 30 100 0},clip]{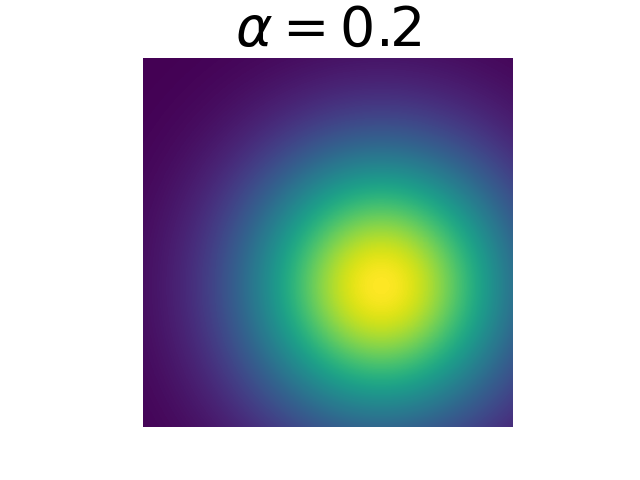}
    \end{tabular}
    \label{fig:mfg_approx}
  \end{subfigure}
  \begin{subfigure}[b]{\linewidth}
    \centering
    \caption{Symmetric Gaussian mixture approximate posterior}
    \begin{tabular}{cccc}
      \includegraphics[width=0.2\linewidth, trim={100 30 100 0},clip]{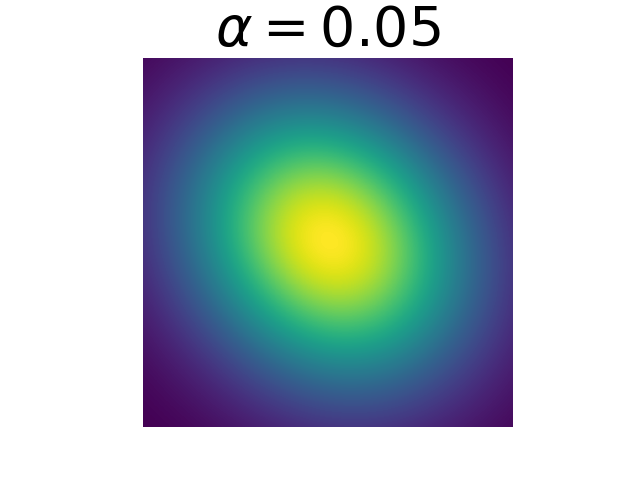} & 
      \includegraphics[width=0.2\linewidth, trim={100 30 100 0},clip]{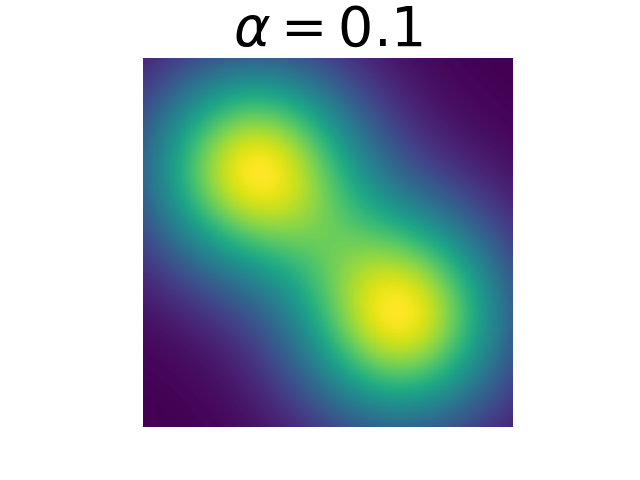} &
      \includegraphics[width=0.2\linewidth, trim={100 30 100 0},clip]{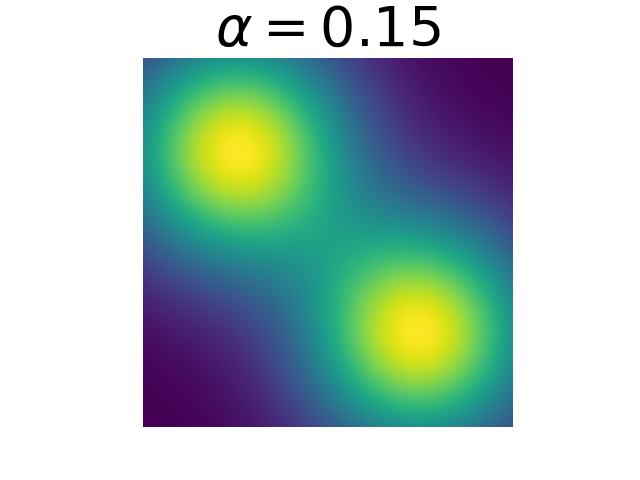} & 
      \includegraphics[width=0.2\linewidth, trim={100 30 100 0},clip]{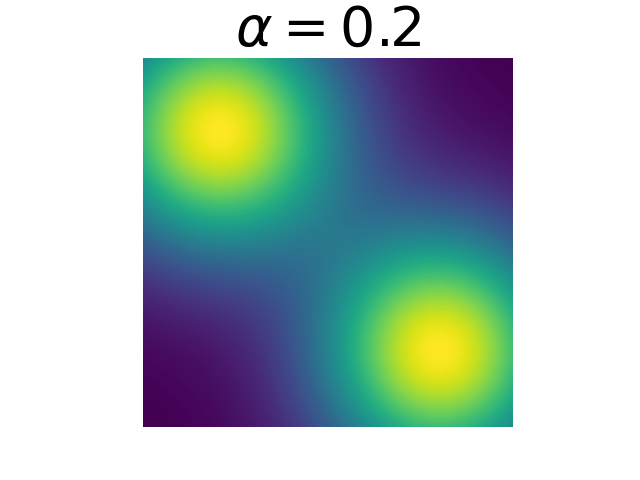}
    \end{tabular}
    \label{fig:sgm_approx}
  \end{subfigure}
\caption{\textbf{Posterior plots for a tractable BNN.} Ground truth and approximate posterior plots for the model $\f^{w_1, w_2}(x) = \text{ReLU}(w_1 x) + \text{ReLU}(w_2 x)$ given data generated by $f_\alpha(x) = \alpha \lvert x \rvert =\text{ReLU}(\alpha x) + \text{ReLU}(-\alpha x)$.}
\label{fig:posterior_comparison}
\end{figure}

\section{Experiments} \label{sec:experiments}
\looseness=-1
In this section, we empirically demonstrate both the effects of weight space permutations and the performance of our proposed mitigation. To do so, we conduct two experiments. First, we look at a simple BNN, for which exact posterior computation is tractable. We compare its ground truth posterior to VI approximations given by both mean-field VI and our proposed symmetrization. Secondly, we train MLPs to classify MNIST digits \cite{MNIST} using both inference techniques. We show that our method achieves better predictive performance, and that the gap grows with model width, validating our intuition from Section \ref{sec:effect_of_permutations}.

\looseness=-1
In both experiments, we use mean-field VI with a Gaussian approximate posterior (referred to as MFVI) as the base variational posterior. That is, $q_\theta(\w) = q_{\m, \boldsymbol{\sigma}}(\w) = \normdist(\w; \m, \diag(\boldsymbol{\sigma}))$, with $\m, \boldsymbol{\sigma} \in \W$. We train $q_{\m, \boldsymbol{\sigma}}(\w)$ using both $\hat{\loss}_\VI(\theta)$ and $\hat{\loss}_\VI^K(\theta)$. The latter corresponds to training a symmetric Gaussian mixture variational posterior (referred to as SGM) of the form $q_{\m, \boldsymbol{\sigma}}(\w) = \frac{1}{|G|}\sum_{g \in G} \normdist(\w; g \cdot \m, \diag(g \cdot \boldsymbol{\sigma}))$. We place an isotropic Gaussian prior on the weights and optimize using Adam \cite{Kingma2014AdamAM}.

\looseness=-1
At test time, we take the average prediction of models sampled from the variational posterior, i.e. $\hat{\f}(\x) = \frac{1}{S}\sum_{i = 1}^S\f^{\w_i}(\x)$, with $\w_1, \dots, \w_S \sim q_\theta(\w)$. We evaluate $\hat{\f}(\x)$ on the test set with different random seeds, and report average MSE or accuracy $\pm$ standard deviation. We additionally report ELBO values for the training set, which can be interpreted as measuring the fit of our approximation to the true posterior.

\begin{table*}
    \centering
        \resizebox{\textwidth}{!}{
            \begin{tabular}{ccccc}
                \toprule
                \multirow{2}{*}{\textbf{Method}} & \multicolumn{4}{c}{\textbf{Hidden dimension}} \\
                \cmidrule{2-5}
                & $\mathbf{5}$ & $\mathbf{10}$ & $\mathbf{20}$ & $\mathbf{30}$ \\
                \hline
                Mean-field VI with a Gaussian approximate posterior & $88.379 \pm 0.023 \%$ & $93.217 \pm 0.022 \%$ & $94.836 \pm 0.025 \%$ & $95.432 \pm 0.023 \%$ \\
                Symmetric Gaussian mixture VI ($K=5$) & $\mathbf{88.423} \pm 0.027 \%$ & $\mathbf{93.221} \pm 0.021 \%$ & $94.899 \pm 0.035 \%$ & $95.534 \pm 0.033 \%$ \\ 
                Symmetric Gaussian mixture VI ($K=10$) & $88.405\pm 0.034 \%$ &  $93.220\pm 0.023 \%$ &   $94.896\pm 0.026 \%$ &  $95.540\pm 0.025 \%$ \\ 
                Symmetric Gaussian mixture VI ($K=20$) &  $88.408 \pm 0.026 \%$ & $\mathbf{93.221} \pm 0.027 \%$ & $\mathbf{94.905} \pm 0.024 \%$ & $\mathbf{95.552} \pm 0.023 \%$ \\
                \bottomrule
            \end{tabular}
        }
    \caption{\textbf{Test accuracy on MNIST.} Predictions taken from an average of $1000$ models sampled from the approximate posterior. Results are shown as mean $\pm$ standard deviation over $10$ runs with different random seeds. Higher hidden dimension leads to an increase in the number of equivalent modes in the true posterior and thus a larger performance increase over mean-field VI.} \label{tab:mnist}
\end{table*}

\looseness=-1
\textbf{Tractable posterior neural networks.} Due to the intractability of ground truth posterior distributions for large models, our first experiment involves a small BNN for which we can analytically compute the posterior. We use a two-layer MLP with scalar input and output, two hidden neurons, ReLU activations, no biases, and a fixed output layer. This design yields a model with two trainable parameters $\w = (w_1, w_2)$ and a forward pass of $\f^{\w}(x) = \text{ReLU}(w_1 x) + \text{ReLU}(w_2 x)$. This is the smallest example that still exhibits weight space permutation symmetries, which in this case consist only of the identity permutation and the transposition $(w_1, w_2) \mapsto (w_2, w_1)$. This setup allows us to evaluate the posterior fit for both inference techniques with high accuracy.

\looseness=-1
We train on synthetic data generated by the function $f_\alpha(x) = \alpha \lvert x \rvert = \text{ReLU}(\alpha x) + \text{ReLU}(-\alpha x)$. The weight configurations $(w_1^*, w_2^*) = (\alpha, -\alpha)$ and $(w_1^*, w_2^*) = (-\alpha, \alpha)$ are equivalent optimal solutions that constitute the two equivalent modes of the true posterior, plotted in Figure \ref{fig:exact_posterior}. The distance between the modes grows linearly with $\alpha$. We use training and test sets of size $N=100$, generated as $\left((x_1, f_\alpha(x_1)), \dots, (x_N, \f_\alpha(x_N))\right)$ with $x_1, \dots, x_N \sim \text{uniform}\left([-10, 10]\right)$ i.i.d. We train our approximate inference methods using mini-batches of $10$ data points with $\text{learning rate} = 5 \times 10^{-3}$. Results are collected after $10$ epochs.

We plot the posterior approximations given by both MFVI and SGM (with $K = 2$) in Figure \ref{fig:posterior_comparison}. Figure \ref{fig:mfg_approx} demonstrates our intuition from Section \ref{sec:effect_of_permutations} and shows that the unimodal approximate posterior interpolates between the modes. This translates to poor predictive performance, as demonstrated in Table \ref{tab:abs_experiment_random_init}. On the other hand, Figure \ref{fig:sgm_approx} demonstrates the more accurate posterior fit of the symmetrization. This leads to higher (symmetrized) ELBO values, detailed in Table \ref{tab:abs_experiment_elbo_g}, and better predictive performance, detailed in Table \ref{tab:abs_experiment_random_init}.

\begin{table}
    \centering
        \resizebox{\columnwidth}{!}{
            \begin{tabular}{ccccc}
                \toprule
                \multirow{2}{*}{\textbf{Method}} & \multicolumn{4}{c}{$\boldsymbol{\alpha}$} \\
                \cmidrule{2-5}
                & $\mathbf{0.05}$ & $\mathbf{0.1}$ & $\mathbf{0.15}$ & $\mathbf{0.2}$ \\
                \midrule
                MFVI &  $0.11 \pm 0.22$ & $0.12 \pm 0.16$ & $0.13 \pm 0.15$ & $0.18 \pm 0.19$ \\
                SGM ($K = 2$) & $\mathbf{0.09} \pm 0.19$ & $\mathbf{0.09} \pm 0.17$ & $\mathbf{0.10} \pm 0.15$ & $\mathbf{0.12} \pm 0.16$ \\ 
                \bottomrule
            \end{tabular}
        }
    \caption{\textbf{Test MSE on synthetic dataset.} Predictions are taken from an average of $1000$ models sampled from the approximate posterior. Results are shown as mean $\pm$ standard deviation over $10$ runs with different random seeds.}
    \label{tab:abs_experiment_random_init}
\end{table}

\begin{table}
    \centering
        \resizebox{\columnwidth}{!}{
            \begin{tabular}{cccccc}
                \toprule
                \multirow{2}{*}{\textbf{Method}} & \multirow{2}{*}{\textbf{ELBO}} &\multicolumn{4}{c}{$\boldsymbol{\alpha}$} \\
                \cmidrule{3-6}
                & & $\mathbf{0.05}$ & $\mathbf{0.1}$ & $\mathbf{0.15}$ & $\mathbf{0.2}$ \\
                \midrule
                \multirow{2}{*}{MFVI} & $\loss_\VI$ & $-0.022$ & $-0.025$ & $-0.032$ & $-0.042$ \\
                & $\loss_\VI^G $& $-0.009$ & $-0.013$ & $-0.021$ & $-0.031$ \\ 
                \midrule
                \multirow{2}{*}{SGM ($K = 2$)} &  $\loss_\VI$ & $-0.023$ & $-0.029$ & $-0.035$ & $-0.043$ \\
                & $\loss_\VI^G $ &  $-0.004$ & $-0.007$ & $-0.012$ & $-0.019$ \\
                \bottomrule
            \end{tabular}
        }
    \caption{\textbf{Normalized ELBO values on synthetic dataset.} $\loss_\VI^G$ is approximated as $\hat{\loss}_\VI^K$ with $K=500$ and estimated using $1000$ weight samples from $q_\theta(\w)$. Higher values are better as they imply better posterior fit. Notice that for both inference methods, $\loss_\VI \leq \loss_\VI^G$. The symmetrized variational posterior has higher $\loss_\VI^G$ values and marginally lower $\loss_\VI$ values.}
    \label{tab:abs_experiment_elbo_g}
\end{table}

\looseness=-1
\textbf{MNIST classifiers.} For the second experiment, we train BNNs to classify MNIST digits using both inference methods. We use single hidden layer MLPs with increasing hidden dimension as the underlying models. We train for $10$ epochs with a batch size of $100$ and learning rate $=10^{-3}$. For the symmetric Gaussian mixture posterior, results are reported for $K=5, 10, 20$. Accuracy results are displayed in Table \ref{tab:mnist}. The improvement in accuracy achieved by the symmetrization increases with model width, verifying our intuition from Theorem \ref{thm:mode_proximity}.

\looseness=-1
We empirically noticed that the variance of $\hat{H}^K(\theta)$ grows with model size. This can be mitigated by increasing the value of $K$, which reduces the variance of $\hat{H}^K(\theta)$ and leads to more stable training. Ideally, in order to unlock the full potential of this approach for larger models, we want to be able to reduce the variance of $\hat{H}^K(\theta)$ without using large values of $K$, which increase the computational cost. This could be achieved in the future using variance reduction techniques, such as importance sampling or stratified sampling.

\vspace{2mm}
\section{Conclusion} \label{sec:limitations}
\looseness=-1
In this work, we demonstrate the effects of weight space permutation symmetries in MLPs on unimodal VI posterior approximations. We propose a symmetrization mechanism to mitigate these effects, prove that it improves the posterior fit of the variational distribution, and demonstrate its performance improvements over mean-field VI. This approach offers an architecture-agnostic way to construct trainable permutation invariant variational posteriors, and paves the way for incorporating other weight-space symmetries in VI.

\looseness=-1
\textbf{Limitations.} One limitation of our approach is that the variance of $\hat{H}^K(\theta)$ grows model size, which currently limits its effectiveness for larger architectures. This could be mitigated in the future using variance reduction techniques such as importance sampling. One possible mitigation, in the case of symmetric Gaussian mixtures, is importance sampling $g_1, \dots, g_{K-1}$ with a proposal distribution that places lower mass on permutations associated with far modes, e.g. $f(g_j) \propto \exp\left(-\lVert g_j \cdot \m - \m \rVert_2\right)$. This gives more weight to permutations that lead to modes that are closely spaced, which are the ones influencing the entropy of $q_\theta^G(\w)$ the most. Implementing this, or other variance reduction strategies, is left for future work.

\looseness=-1
\textbf{Future directions.} Our approach achieves invariance by averaging the density over group orbits. Exploring alternative avenues for building invariant variational posteriors is a worthwhile research direction. Promising approaches include: \begin{inparaenum}[(1)] \item \textit{Invariant architectures.} Several invariant architectures for neural network weight spaces have recently been proposed \cite{DWS, Zhou2024UniversalNF, lim2024graph, kofinas2024graph}. Designing expressive weight space permutation invariant posteriors using these architectures is an interesting future direction. \item \textit{Input canonicalization.} Using weight space alignment techniques such as \cite{ainsworth2023git, penaetal, navon2024equivariant} to transform weights into a canonical form, we can potentially perform inference on the quotient space $\W / G$. \end{inparaenum} Finally, extending our framework to other weight space symmetries is another interesting direction. Most of the analysis in this paper can be extended to general finite/compact symmetry groups of weight spaces.

\section{Acknowledgments}
Yoav Gelberg is supported by the the Engineering and Physical Sciences Research Council (EPSRC) Centre for Doctoral Training in Autonomous and Intelligent Machines and Systems (grant reference EP/S024050/1). Yarin Gal is supported by a Turing AI Fellowship financed by the UK government’s Office for Artificial Intelligence, through UK Research and Innovation (grant reference EP/V030302/1) and delivered by the Alan Turing Institute.

\bibliography{references}
\bibliographystyle{icml2024}

\newpage
\appendix
\onecolumn
\section{Mode Interpolation: Experimental Details and Additional Results} \label{apx:mode_interpolation}

\paragraph{Experimental details.} Since $\KL(q_{\m, \s}(\x) \mid \mid p_\alpha(\x))$ does not have a closed-form expression, we minimize the empirical reverse KL which is estimated by sampling $S$ points $\x_1, \dots, \x_S$ from $q_{\m, \s}(\x)$ and computing:
\begin{equation}
    \hat{\KL} = \frac{1}{S} \sum_{i = 1}^S \log \left( \frac{q_{\m, \s}(\x_i)}{p_\alpha(\x_i)} \right)
\end{equation}
Using the reparametrization trick to compute gradients w.r.t $\m$ and $\s$, we minimize $\hat{\KL}$ via stochastic gradient descent. We use $S = 5 \times 10^3$ samples to approximate $\hat{\KL}$ and perform $3 \times 10^3$ training steps with learning rate $= 10^{-2}$, empirically observing convergence of the loss.

\paragraph{Scaling of the threshold for mode-seeking behavior.} As mentioned in the main text, the threshold for mean-seeking vs. mode-seeking behavior in the experimental setup described in Section \ref{sec:effect_of_permutations} scales with the standard deviation of $p_\alpha(\x)$'s mixture components. To see that, we define $p_\alpha^\sigma(\x) = \frac{1}{2}\left(\normdist(\x; \mathbf{0}, \sigma^2 \mathbf{I}) + \normdist(\x; \alpha \mathbf{u}, \sigma^2 \mathbf{I})\right)$, and repeat the experiment presented in Figure \ref{fig:revrese_kl} with different values of $\sigma$. The following are plots of the results for $\sigma=0.25, 0.5, 1, 2$.

\begin{figure*}[h!]
\centering
    \centering
    \begin{tabular}{ccc}
      \includegraphics[width=0.23\textwidth, trim={0 0 0 0},clip]{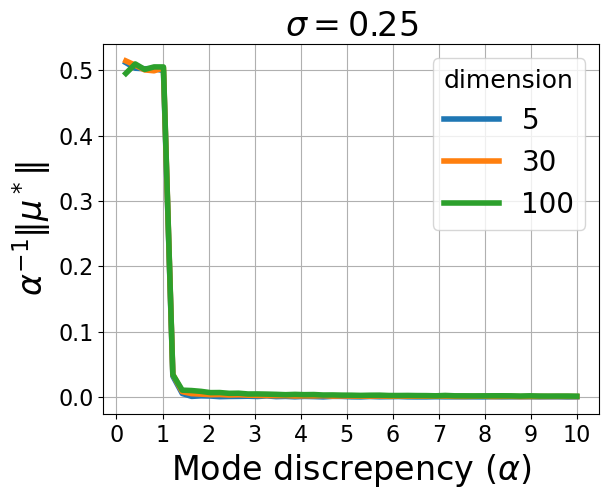} &
      \includegraphics[width=0.23\textwidth, trim={0 0 0 0},clip]{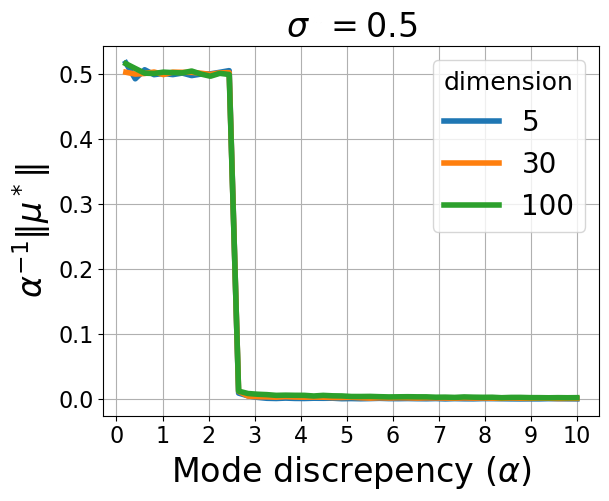} &
      \includegraphics[width=0.23\textwidth, trim={0 0 0 0},clip]{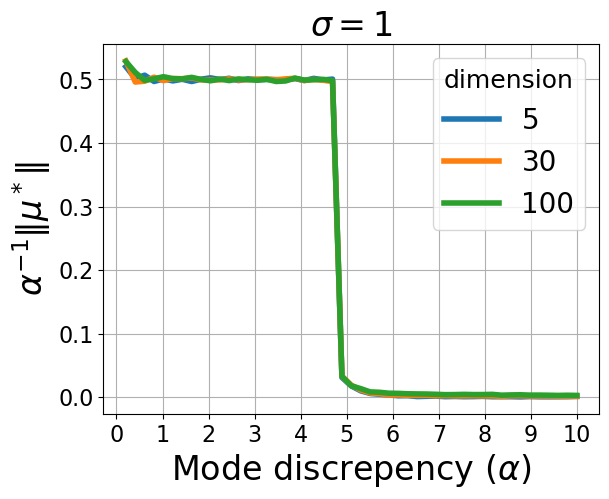} 
      \includegraphics[width=0.23\textwidth, trim={0 0 0 0},clip]{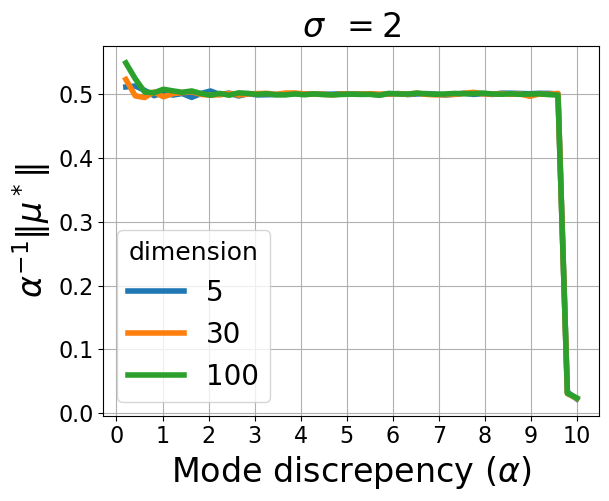} 
    \end{tabular}
    \begin{tabular}{ccc}
      \includegraphics[width=0.23\textwidth, trim={0 0 0 0},clip]{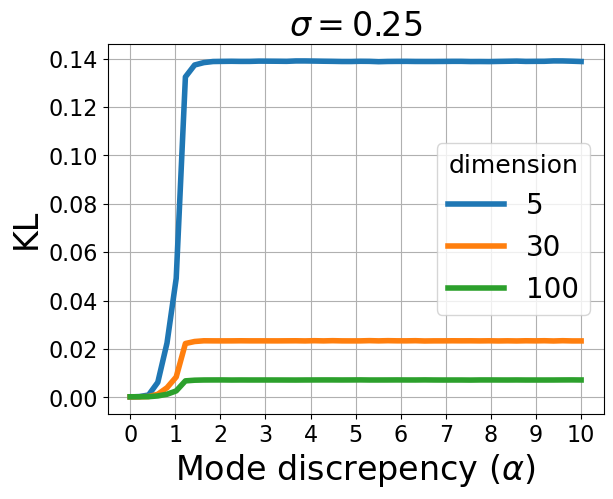} &
      \includegraphics[width=0.23\textwidth, trim={0 0 0 0},clip]{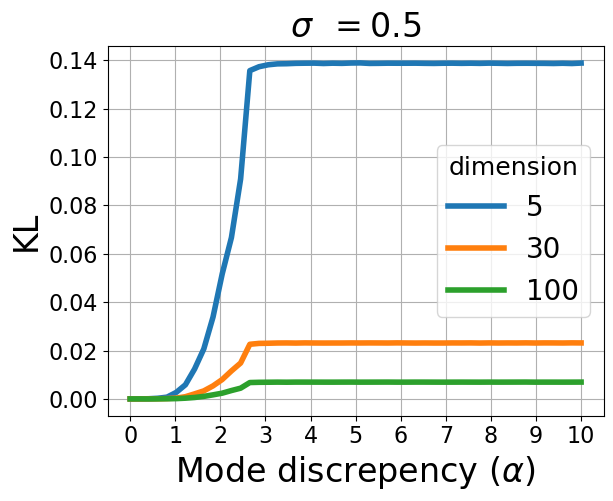} &
      \includegraphics[width=0.23\textwidth, trim={0 0 0 0},clip]{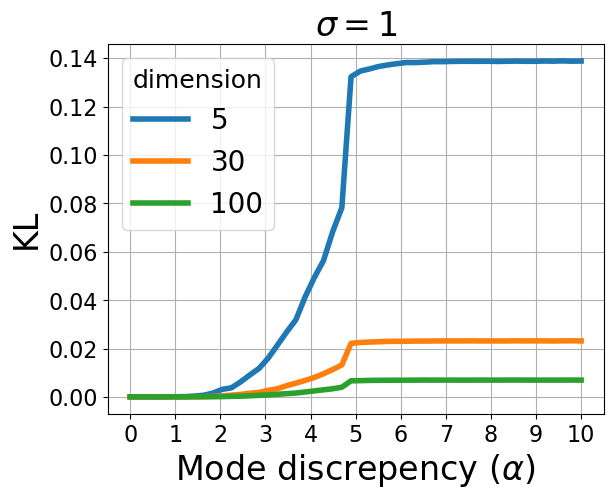} 
      \includegraphics[width=0.23\textwidth, trim={0 0 0 0},clip]{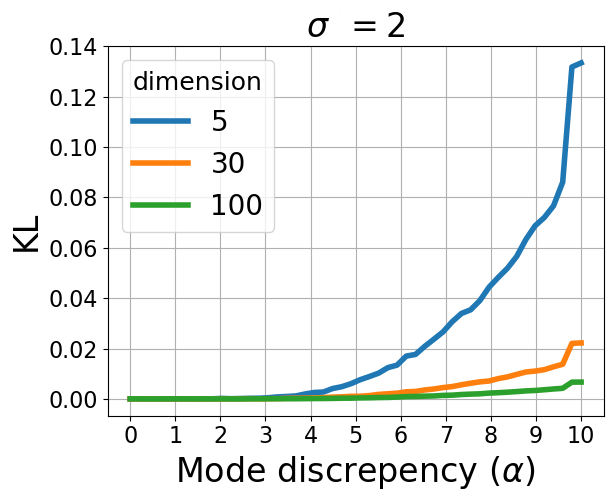} 
    \end{tabular}
    \caption{\textbf{Threshold for mode-seeking behavior.} We plot $\alpha^{-1}\lVert \m^* \rVert_2$ (interpreted as quantifying the amount of interpolation between $p_\alpha^\sigma(\x)$'s modes) and the optimal $\hat{\KL}$ reached by optimization. Plots are provided for $\sigma = 0.25, 0.5, 1, 2$. As can be seen from the plots, the corresponding thresholds $\alpha^* \approx 1.25, 2.5, 5, 10$, scale linearly with $\sigma$.}
\end{figure*}

\section{Mode Proximity Proof} \label{apx:mode_proximity}
\begin{proof}[Proof of Theorem \ref{thm:mode_proximity}]
    Let $\w \neq \mathbf{0}$ be a weight configuration $\in \W$ (the result holds trivially for $\w = \mathbf{0}$). Denote $D = \dim(\W) = d_i d_h + d_h + d_h d_o + d_o$ and fix $R = \frac{D}{\log(\lvert G \rvert)} \lVert \w \rVert_2$. Define $B(\w', r) = \left \{\w \in \W \mid \lVert \w -\w'\rVert_2 \leq r\right\}$ to be the ball of radius $r \in \mathbb{R}$ around $\w' \in \W$. Assume, by way of contradiction, that for all non-trivial $g \in G$, $\lVert \w - g \cdot \w \rVert_2 > 2 R$. Since the action of $G$ is norm-preserving, this implies that $\forall g_1 \neq g_2 \in G$:
    \begin{equation*}
        \lVert g_1 \cdot \w - g_2 \cdot \w \rVert_2 = \lVert g_1 \cdot (\w - g_1^{-1} g_2\cdot w_2)\rVert_2 = \lVert \w - g_1^{-1} g_2\cdot \w \rVert_2 > 2 R
    \end{equation*}
    Therefore, $\forall g_1 \neq g_2 \in G, \, B(g_1 \cdot \w, R) \cap B(g_2 \cdot \w, R) = \emptyset$.
    Additionally, if $\w' \in B(g \cdot \w, R)$ we have
    \begin{equation*}
        \lVert \w' \rVert_2 = \lVert \w' - g \cdot \w + g \cdot \w \rVert_2 \leq \lVert \w' - g\cdot \w \rVert_2 + \lVert g \cdot \w \rVert_2 \leq R + \lVert \w \rVert_2
    \end{equation*}
    so $B(g \cdot \w, R) \subseteq B(\mathbf{0}, \lVert \w \rVert_2 + R)$. Combining these observations, we get that $\bigsqcup_{g\in G} B(g \cdot \w, R)\subseteq B(\mathbf{0}, \lVert \w \rVert_2 + R)$. Therefore, looking at the volume of these subsets
    \begin{equation} \label{eq:vol}
        \sum_{g \in G} \vol(B(g \cdot \w, R)) = \vol \left( \bigsqcup_{g \in G} B(g \cdot \w, R)\right) \leq \vol(B(\mathbf{0}, \lVert \w \rVert_2 + R)) 
    \end{equation}
    The volume of a $D$-dimensional ball of radius $r$ is a constant times $r^D$, so Equation \eqref{eq:vol} becomes
    \begin{equation}\label{eq:basic_bound}
        |G| R^D \leq (\lVert \w \rVert_2 + R)^D
    \end{equation}
    Rearranging and taking the $\log$ of both sides gives $\log(\lvert G \rvert) \leq D \log \left(\frac{\lVert \w \rVert_2}{R} + 1\right)$. Using the fact that $\log(1 + x) < x$ for $x \neq 0$, we get $R < \frac{D}{\log(|G|)} \lVert \w \rVert_2$, which contradicts the definition of $R$. Therefore, the contradicting hypothesis is false, and $\exists g \neq e$ such that $\lVert \w - g \cdot \w \rVert_2 \leq 2 R$. In order to analyze the resulting bound, recall that $D = d_i d_h + d_h + d_h d_o + d_o$ and $\log(|G|) = \log(d_h!) \geq \frac{d_h}{2} \log(\frac{d_h}{2})$. This results in the bound
    \begin{equation}
        \lVert \w - g \cdot \w \rVert_2 \leq 2 R = \frac{2\left(d_i + d_o + 1 + \frac{1}{d_h}\right)}{\log(\frac{d_h}{2})} \lVert \w \rVert_2 = \mathcal{O}\left(\frac{d_i + d_o}{\log(d_h)} \lVert \w \rVert_2\right)
    \end{equation}
    as in Theorem \ref{thm:mode_proximity}.
\end{proof}

\section{Sampling From $q_\theta(\w)$ vs. Sampling From $q_\theta^G(\w)$} \label{apx:sampling}
In Section \ref{sec:symmetrization} we introduced the symmetrized variational posterior $q_\theta^G(\w)$. We mention that we can sample from $q_\theta^G(\w)$ by sampling $\w \sim q_\theta (\w)$ and transforming it using a random $g \sim \text{Uniform}(G)$, and that the induced function space distribution is the same as the one induced by sampling weights from $q_\theta(\w)$. The following elaborates why this is the case. 

Since $q_\theta^G(\w)$ is a uniform mixture of the distributions $\{g_\# q_\theta(\w)\}_{g \in G}$, we can sample from it by sampling $g \sim \text{uniform}(G)$ and then taking $\w \sim g_\# q_\theta(\w)$. By the definition of the pushforward measure, if $\w \sim q_\theta(\w)$, then $\w' = g \cdot \w \sim g_\# q_\theta(\w')$. Therefore, we can sample from $q_\theta^G(\w')$ by independently sampling $g \sim \text{uniform}(G)$, $\w \sim q_\theta(\w)$ and taking $\w' = g \cdot \w$. We now consider the functions $\f^{\w}(\x)$ and $\f^{\w'}(\x)$, where $\w \sim q_\theta(\w)$ and $\w' = g \cdot \w \sim q_\theta^G(\w')$. Since the mapping $\w \mapsto \f^{\w}$ is $G$-invariant (as mentioned in Section \ref{sec:perlimiaries}), the function represented by $\w'$ is $\f^{\w'} \equiv \f^{g \cdot \w} \equiv \f^{\w}$. In other words the function space distribution induced by $\w \sim q_\theta(\w)$ and $\w' \sim q_\theta^G(\w')$ are identical.

\section{Symmetrization of the ELBO} \label{apx:elbo_correction}
In Section \ref{sec:symmetrization} we introduced the symmetrized ELBO objective $\loss_\VI^G(\theta)$. Theorem \ref{thm:elbo_correction} establishes a relationship between $\loss_\VI^G(\theta)$ and $\loss_\VI(\theta)$ and states that
\begin{enumerate}
    \item $\loss_\VI^G(\theta) = \loss_\VI(\theta) + H(q_\theta^G(\w)) - H(q_\theta(w))$. \label{item:second}
    \item $H(q_\theta^G(\w)) - H(q_\theta(\w)) = I(g; \w)$, where $I(g; \w)$ is the mutual information of $g, \w \sim q_\theta^G(g, \w)$. \label{item:third}
\end{enumerate}
\begin{proof}[Proof of Theorem \ref{thm:elbo_correction}]
    Recall that
    \begin{equation}
        \loss_\VI^G(\theta) = \underbrace{\E_{\w \sim q_\theta^G(\w)}\left(\log\left(p(\Y \mid \X, \w)\right)\right)}_\text{expected log-likelihood term} - \underbrace{\KL\left(q_\theta^G(\w) \mid \mid p(\w)\right)}_\text{prior KL term}
    \end{equation}
    We start by analyzing the expected log-likelihood term.
    \begin{equation}
    \begin{split}
        \E_{\w \sim q_\theta^G(\w)} \left(\log(p(\Y \mid \X, \w))\right) &=\int_\W \log(p(\Y \mid \X, \w)) \frac{1}{|G|}\sum_{g \in G} q_\theta(\g^{-1} \cdot \w) \, d\w  \\
        &= \frac{1}{|G|}\sum_{g \in G}\int_\W \log(p(\Y \mid \X, \w)) q_\theta(g^{-1} \cdot \w) \, d\w \\
        &= \frac{1}{|G|}\sum_{g \in G}\int_{g^{-1} \cdot \W} \log(p(\Y \mid \X, \g \cdot \w)) q_\theta(\w) \lvert \det(g) \rvert \, d\w \\
    \end{split}
    \end{equation}
    As mentioned in the discussion leading to Proposition \ref{thm:invariance_of_posterior}, $p\left(\Y \mid \X, \w\right)$ is $G$-invariant. Additionally, $\lvert \det(g) \rvert  = 1$ from the orthonormality of the representation, and $g^{-1} \cdot \W = \W$ since $g^{-1}$ is invertible. Therefore,
    \begin{equation}
    \begin{split}
        \E_{\w \sim q_\theta^G(\w)} \left(\log(p(\Y \mid \X, \w))\right) &= \frac{1}{|G|} \sum_{g \in G}\int_\W \log(p(\Y \mid \X, \w)) q_\theta(\w) \, d\w  \\
        &= \int_\W \log(p(\Y \mid \X, \w)) q_\theta(\w) \, d\w  \\
        &= \E_{\w \sim q_\theta(\w)} \left(\log(p(\Y \mid \X, \w)) \right)
    \end{split}
    \end{equation}
    Thus, $\loss_\VI^G(\theta)$ and $\loss_\VI(\theta)$ have identical expected log-likelihood terms. Moving on to the prior KL term
    \begin{align*}
        \KL(q_\theta^G(\w) \mid \mid p(\w)) &= \E_{\w \sim q_\theta^G(\w)} \left(\log\left(\frac{q_\theta^G(\w)}{p(\w)}\right)\right) \\
        &= \E_{\w \sim q_\theta^G(\w)} \left(-\log\left(p(\w)\right)\right) - H(q_\theta^G(\w)) \\
        &= -\int \log(p(\w)) \frac{1}{|G|}\sum_{g \in G} q_\theta(g^{-1} \cdot \w) \, d\w - H(q_\theta^G(\w)) \\
        &= -\frac{1}{|G|}\sum_{g \in G}\int \log(p(\w)) q_\theta(g^{-1} \cdot \w) \, d\w - H(q_\theta^G(\w)) \\
        &= -\frac{1}{|G|}\sum_{g \in G}\int \log(p(g \cdot \w)) q_\theta(\w) \lvert \det(g) \rvert \, d\w - H(q_\theta^G(\w)) \\
        &= -\frac{1}{|G|}\sum_{g \in G}\int \log(p(\w)) q_\theta(\w) \, d\w - H(q_\theta^G(\w)) \\
        &= -\int \log(p(\w)) q_\theta(\w) \, d\w - H(q_\theta^G(\w)) \\
        &= \E_{\w \sim q_\theta(\w)} \left(-\log(p(\w))\right)  - H(q_\theta^G(\w)) \\
    \end{align*}
    Therefore, since the prior KL term of $\loss_\VI(\theta)$ is $\KL(q_\theta(\w) \mid \mid p(\w)) =E_{\w \sim q_\theta(\w)} \left(-\log\left(p(\w)\right)\right) - H(q_\theta(\w))$, we have
    \begin{equation}
        \KL(q_\theta^G(\w) \mid \mid p(\w)) = \KL(q_\theta(\w) \mid \mid p(\w)) - H(q_\theta^G(\w)) + H(q_\theta(\w))
    \end{equation}
    Putting it all together, we get
    \begin{equation}
        \loss_\VI^G(\theta) = \loss_\VI(\theta) + H(q_\theta^G(\w)) - H(q_\theta(\w))
    \end{equation}
    which finishes the proof of statement \ref{item:second}. All that is left, is to show that indeed $I(g; \w) = H(q_\theta^G(\w)) - H(q_\theta(\w))$. In order to do so, we need the following lemma 
    \begin{lemma}\label{thm:entropy_preservation}
        Let $\mathcal{V}$ be a finite dimensional inner product space. If $p(\mathbf{v})$ is a probability measure on $\V$ and $T \in O(\V)$ is a norm-preserving linear operator on $\V$, then $H(T_\# p(\mathbf{v})) = H(p(\mathbf{v}))$.
    \end{lemma}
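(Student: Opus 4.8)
The plan is to reduce the claim to the change-of-variables formula for differential entropy, exploiting the fact that orthogonal operators have unit absolute determinant and therefore preserve Lebesgue measure. First I would recall, from the preliminaries on group invariant measures, that the pushforward $T_\# p$ has density $f_{T_\# p}(\mathbf{v}) = \frac{f_p(T^{-1} \cdot \mathbf{v})}{\lvert \det(T) \rvert}$, and that since $T \in O(\V)$ we have $\lvert \det(T) \rvert = 1$, so in fact $f_{T_\# p}(\mathbf{v}) = f_p(T^{-1} \cdot \mathbf{v})$.

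Next I would write out the differential entropy of the pushforward directly from its definition,
\begin{equation*}
    H(T_\# p(\mathbf{v})) = -\int_\V f_{T_\# p}(\mathbf{v}) \log\left(f_{T_\# p}(\mathbf{v})\right) \, d\mathbf{v} = -\int_\V f_p(T^{-1} \cdot \mathbf{v}) \log\left(f_p(T^{-1} \cdot \mathbf{v})\right) \, d\mathbf{v},
\end{equation*}
and then perform the substitution $\mathbf{u} = T^{-1} \cdot \mathbf{v}$. Because $T$ is invertible and linear, $T^{-1} \cdot \V = \V$, and the Jacobian factor is $\lvert \det(T^{-1}) \rvert = 1$, so $d\mathbf{v} = d\mathbf{u}$. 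This turns the integral into $-\int_\V f_p(\mathbf{u}) \log\left(f_p(\mathbf{u})\right) \, d\mathbf{u} = H(p(\mathbf{v}))$, completing the argument.

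There is essentially no hard step here: the lemma is a one-line change of variables, and the only thing to be careful about is tracking that both the density-transformation Jacobian and the integration-substitution Jacobian are controlled by $\lvert \det(T) \rvert = 1$ (they are reciprocals of one another and cancel anyway). The one mild subtlety worth flagging is that $H$ denotes differential entropy, so $\V$ should be equipped with a Lebesgue measure; the orthogonal group is precisely the set of linear maps under which this reference measure is invariant, which is what makes the statement clean. If one wanted to be fully rigorous about integrability, I would note that $f_p \log f_p$ is integrable iff $f_{T_\# p} \log f_{T_\# p}$ is, by the same substitution, so the identity holds in the extended sense (both sides simultaneously finite or infinite).

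For the application back in the main proof, I would then remark that this lemma applies with $T = \rho(g)$ for each $g \in G$, since the representation is orthonormal; in particular $H(g_\# q_\theta(\w)) = H(q_\theta(\w))$ for all $g \in G$, which is exactly what is needed to identify $H(q_\theta^G(\w)) - H(q_\theta(\w))$ with the conditional-entropy gap $H(\w) - H(\w \mid \g) = I(\g; \w)$ under the coupling $q_\theta^G(\g, \w)$, thereby finishing the proof of statement~\ref{item:third} of Theorem~\ref{thm:elbo_correction}.
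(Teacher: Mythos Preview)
Your proposal is correct and follows essentially the same route as the paper: both arguments write out the differential entropy of $T_\# p$ as an integral, apply the change of variables $\mathbf{u} = T^{-1}\mathbf{v}$, and use $\lvert \det(T) \rvert = 1$ together with $T^{-1}\V = \V$ to recover $H(p)$. Your write-up is slightly more explicit about separating the density-transformation Jacobian from the integration Jacobian and about the integrability caveat, but the underlying argument is identical.
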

    Lemma \ref{thm:entropy_preservation} is proved later in Appendix \ref{apx:elbo_correction}. Recall that $G$ acts on $\W$ via an orthonormal representation and that that $q_\theta^G(\w \mid g) = g_\# q_\theta(\w)$. Therefore, by the definition of mutual information, and using Lemma~\ref{thm:entropy_preservation}, we get
\begin{align*}
    I(g; \w) 
    &= \E_{g, \w \sim q_\theta^G(g, \w)}\left(\log\left(\frac{q_\theta^G(\w \mid g)}{q_\theta^G(\w)}\right)\right) \\
    &= \E_{g, \w \sim q_\theta^G(g, \w)}\left(\log\left(q_\theta^G(\w \mid g)\right)\right) - \E_{g, \w \sim q_\theta^G(g, \w)}\left(\log\left(q_\theta^G(\w)\right)\right) \\
    &= \E_{g \sim q_\theta^G(g)} \left(\E_{\w \sim q_\theta^G(\w \mid g)}\left(\log\left(q_\theta^G(\w \mid g)\right)\right)\right) - \E_{\w \sim q_\theta^G(\w)}\left(\E_{g \sim q_\theta^G(g \mid \w)}\left( \log\left(q_\theta^G(\w)\right)\right)\right) \\
    &= -\E_{g \sim q_\theta^G(g)} \left(H(q_\theta^G(\w \mid g)\right)) - \E_{\w \sim q_\theta^G(\w)}\left( \log\left(q_\theta^G(\w)\right)\right) \\
    &= -\E_{g \sim q_\theta^G(g)} \left(H(g_\# q_\theta(\w)\right)) + H(q_\theta^G(\w))\\ 
    &= -H(q_\theta(\w)) + H(q_\theta^G(\w))
\end{align*}
\end{proof}
All that is left is to prove Lemma \ref{thm:entropy_preservation}.

\begin{proof}[Proof of Lemma \ref{thm:entropy_preservation}]
    With the notations of lemma \ref{thm:entropy_preservation}, and using the change of variable formula for pushforward measures:
    \begin{align*}
        H(T_\# p(\mathbf{v})) &= \int_{\V} p\left(T^{-1}(\mathbf{v})\right) \log\left(p\left(T^{-1}(\mathbf{v})\right)\right) \, d\mathbf{v} \\
        &= \int_{T(\V)} p(\mathbf{v}) \log\left(p(\mathbf{v})\right) \lvert \det(D_{\mathbf{v}} T(\mathbf{v}))\rvert  \, d\mathbf{v} \\
        &= \int_{\V} p(\mathbf{v}) \log\left(p(\mathbf{v})\right) \lvert \det(T)\rvert  \, d\mathbf{v} \\
        &= \int_\V p(\mathbf{v}) \log\left(p(\mathbf{v})\right) \, d\mathbf{v} \\
        &= H(p(\mathbf{v}))
    \end{align*}
    where we used the fact that $T \in O(\V) \Rightarrow T(\mathcal{V}) = \mathcal{V}$ and $\det(T) = \pm 1$.
\end{proof}

\section{Mutual Information Estimation} \label{apx:mi_estimation}
Mutual information (MI) is a fundamental concept in information theory that measures the dependence between two random variables. It plays a crucial role in various areas of machine learning, including representation learning, reinforcement learning, deep learning theory, and more. Given random variables $\x, \y$, with a joint probability distribution $p(\x, \y)$, their mutual information is defined as:
\begin{equation}
    I(\x; \y) = \E_{\x, \y  \sim p(\x, \y)} \left(\frac{p(\x \mid \y)}{p(\x)}\right) = \E_{\x, \y  \sim p(\x, \y)} \left(\frac{p(\y \mid \x)}{p(\y)}\right)
\end{equation}
Depending on the problem, estimating MI directly can be challenging. Therefore, various variational bounds have been established to approximate MI. See \cite{Poole2019OnVB} for a survey. In our case, in order to estimate $I(g; \w)$, we leverage the fact that conditional $q_\theta^G(\w \mid \g) = q_\theta(g^{-1} \cdot \w)$ is tractable, and use the InfoNCE bound.
\begin{theorem}[InfoNCE bound with optimal critic \cite{Poole2019OnVB}]\label{thm:infonce}
    Given two random variables $\x, \y$ distributed by a joint probability function $p(\x, \y)$ the mutual information $I(\x ; \y)$ satisfies 
    \begin{equation}
        I(\x; \y) \geq I^K(\x; \y) = \E\left(\frac{1}{K}\sum_{i = 1}^K\log\left(\frac{p(\x_i \mid \y_i)}{\frac{1}{K}\sum_{j = 1}^K p(\x_i \mid \y_j)}\right)\right) \label{eq:infonce_l}
    \end{equation}
    Where the expectation is over $K$ i.i.d samples from $p(\x, \y)$, i.e. over $\Pi_{i = 1}^K p(\x_i, \y_i)$. Moreover, $\lim_{K \to \infty} I^K(\x ; \y) = I(\x ; \y)$.
\end{theorem}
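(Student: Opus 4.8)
The plan is to derive the bound $I(\x;\y)\ge I^K(\x;\y)$ from a single-sample variational lower bound on mutual information applied to a cleverly \emph{batched} critic, and to obtain the limit by a law-of-large-numbers argument. Two preliminary reductions set things up. First, since the $K$ summands defining $I^K$ are identically distributed (the pairs $(\x_i,\y_i)$ are i.i.d.), exchangeability collapses the average over $i$ into a single term,
\[
I^K(\x;\y)=\E\left(\log\frac{p(\x\mid\y_1)}{\frac1K\sum_{j=1}^K p(\x\mid\y_j)}\right),
\]
where now $(\x,\y_1)\sim p(\x,\y)$ and $\y_2,\dots,\y_K\sim p(\y)$ i.i.d. Second, I would use that $I(\x;\y)=\KL\!\left(p(\x,\y)\,\middle\|\,p(\x)p(\y)\right)$, which is the object to be lower bounded.

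Next I would establish the single-sample variational bound (the NWJ/Donsker--Varadhan form): for \emph{any} critic $T(\x,\y)$,
\[
I(\x;\y)\ge \E_{p(\x,\y)}\!\left(T(\x,\y)\right)-\E_{p(\x)p(\y)}\!\left(e^{\,T(\x,\y)-1}\right).
\]
This follows from the elementary inequality $e^u\ge 1+u$: writing $h=\log\frac{p(\x,\y)}{p(\x)p(\y)}-T$, a change of measure turns the gap $I-\big(\E_{p(\x,\y)}(T)-\E_{p(\x)p(\y)}(e^{T-1})\big)$ into $\E_{p(\x,\y)}\!\left(h+e^{-h-1}\right)$, which is nonnegative pointwise since $e^{-h-1}\ge -h$.

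The crux is the choice of critic. I would enlarge the probability space by the independent negatives $\y_2,\dots,\y_K\sim p(\y)$; since they are independent of $(\x,\y_1)$ they leave the mutual information unchanged, $I\big(\x;(\y_1,\dots,\y_K)\big)=I(\x;\y_1)$. Applying the variational bound to this augmented pair with the batch-normalized critic
\[
T=1+\log\frac{p(\x\mid\y_1)}{\frac1K\sum_{j=1}^K p(\x\mid\y_j)}
\]
makes the first term equal to $1+I^K(\x;\y)$ by the reduction above, while the second term is $\E_{p(\x)\prod_j p(\y_j)}\big(\frac{K\,p(\x\mid\y_1)}{\sum_j p(\x\mid\y_j)}\big)$. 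Under this measure all $\y_j$ are exchangeable and independent of $\x$, so the identity $\sum_i \frac{p(\x\mid\y_i)}{\sum_j p(\x\mid\y_j)}=1$ forces each summand to have expectation $1/K$, and the second term equals exactly $1$. Subtracting gives $I\ge(1+I^K)-1=I^K$. I expect this collapse of the partition term to exactly one to be the main obstacle to spot: the naive route, rewriting $I-I^K=\E\big(\log\frac{\frac1K\sum_j p(\x\mid\y_j)}{p(\x)}\big)$, cannot be closed by Jensen (it points the wrong way), and it is precisely the batched critic that repairs this.

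Finally, for convergence I would reuse the difference $I-I^K=\E\big(\log\frac{\frac1K\sum_{j=1}^K p(\x\mid\y_j)}{p(\x)}\big)$. By the strong law of large numbers the average $\frac1K\sum_{j=1}^K p(\x\mid\y_j)\to \E_{\y\sim p(\y)}(p(\x\mid\y))=p(\x)$ almost surely (the lone coupled term $j=1$ washes out), so the integrand tends to $0$; a dominated/monotone convergence argument then yields $I^K\to I$ when $I<\infty$, with the $I=\infty$ case following since $I^K$ is a nondecreasing lower bound forced to diverge. The substantive content is the bound; this last step is routine measure theory.
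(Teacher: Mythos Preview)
The paper does not supply a proof of this theorem; it is quoted as a known result from \cite{Poole2019OnVB}, and the convergence statement is deferred to \cite{Chen2021SimplerFS}. Your derivation is correct and is in fact the standard argument from that literature: collapse the outer average to a single summand by exchangeability, apply the NWJ/Donsker--Varadhan lower bound $I\ge \E_{p}[T]-\E_{p_\x\otimes p_\y}[e^{T-1}]$ on the augmented space $(\x,(\y_1,\dots,\y_K))$ with the batch-normalized critic, and use symmetry of the $\y_j$'s under the product-of-marginals measure to make the partition term equal exactly $1$. The limit via the strong law and dominated convergence is the right outline; the two spots that would need to be filled in for a fully rigorous write-up are (i) an explicit dominating function or uniform-integrability argument for passing the limit through the expectation, and (ii) the monotonicity of $I^K$ in $K$, which you invoke for the $I=\infty$ edge case but do not derive here. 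Since the paper itself defers both the bound and the limit to citations, your proposal already provides more detail than the paper does.
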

Proof for $\lim_{K \to \infty} I^K(\x, \y) = I(\x; \y)$ can be found in \cite{Chen2021SimplerFS}. The InfoNCE bound offers a good trade-off between bias and variance compared to other MI estimation methods \cite{Song2020Understanding}. However, reducing bias requires increasing the number of samples $K$, which has quadratic cost in $K$. Fortunately, in our case, the specific structure of $q_\theta^G(\w)$ allows us to simplify Equation \eqref{eq:infonce_l} and obtain a more efficient estimator, presented in Theorem \ref{thm:entropy_bound}.
\begin{proof}[Proof of Theorem \ref{thm:entropy_bound}]
    Plugging in $q_\theta^G(g, \w)$ to Theorem \ref{thm:infonce} yields
        \begin{equation} \label{eq:infonce_gw}
            I^K(\theta) = I^K(g; \w) = \E\left(\frac{1}{K}\sum_{i = 1}^K\log\left(\frac{q_\theta^G(\w_i \mid \g_i)}{\frac{1}{K}\sum_{j = 1}^K q_\theta^G(\w_i \mid \g_j)}\right)\right)
        \end{equation}
    where the expectation is over $g_1, \dots, g_K \sim \text{uniform}(G)$ i.i.d and $\w_i \sim q_\theta^G(\w \mid g_i)$. Note that we can sample $\w_i \sim q_\theta^G(\w \mid g_i)$ by sampling $\w_i' \sim q_\theta(\w)$ and taking $\w_i = g_i \cdot \w_i'$. Using this fact, Equation \eqref{eq:infonce_gw} becomes
    \begin{align}
        \begin{split}
            I^K(\theta) &= \E\left(\frac{1}{K}\sum_{i = 1}^K\log\left(\frac{q_\theta(\w_i')}{\frac{1}{K}\sum_{j = 1}^K q_\theta(g_j^{-1} g_i \cdot \w_i')}\right)\right) \\
            &= -\E\left(\frac{1}{K}\sum_{i = 1}^K\log\left(\frac{1}{K}\sum_{j = 1}^K q_\theta(g_j^{-1} g_i \cdot \w_i')\right)\right) -H(q_\theta(\w)) 
        \end{split}
    \end{align}
    where the expectation is over $g_1, \dots, g_K \sim \text{uniform}(G)$ i.i.d and $\w_1', \dots, \w_K' \sim q_\theta(\w)$ i.i.d. We can further simplify $I^K(\theta)$ using the group structure of $G$, to that end we write 
    \begin{align}
        \begin{split}
            I^K(\theta) &= -H(q_\theta(\w)) - \frac{1}{K}\sum_{i = 1}^K\E_{\w_i', g_1, \dots, g_K}\left(\log \left(\frac{1}{K}\left(q_\theta(\w_i') + \sum_{j \neq i}^K q_\theta(g_j^{-1} g_i \cdot \w_i')\right)\right)\right) \\
            &= -H(q_\theta(\w)) - \frac{1}{K}\sum_{i = 1}^K\E_{g_i}\left(\overbrace{\E_{\w_i', g_1, \dots, g_{i - 1}, g_{i + 1}, \dots, g_K}\left(\log \left(\frac{1}{K}\left(q_\theta(\w_i') + \sum_{j \neq i} q_\theta(g_j^{-1} g_i \cdot \w_i')\right)\right)\right)}^{A(g_i)}\right)
        \end{split}
    \end{align}
    Since $G$ is a group, inverses and right multiplication are bijective transformations. This implies that for a given $i \in \{1, \dots, K\}$, the random variables $\{g_j^{-1} g_i \mid j \neq i\}$ are i.i.d uniformly on $G$ conditioned on $g_i$. Therefore, $A(g_i)$ can be written as
    \begin{equation}
        A(g_i) = \E\left(\log\left(\frac{1}{K}\left(q_\theta(\w_i') + \sum_{j = 1}^{K -1} q_\theta(g_j' \cdot \w_i')\right)\right)\right)
    \end{equation}
    where the expectation is over $\w_i' \sim q_\theta(\w)$ and $g_1', \dots, g_{K - 1}' \sim \text{uniform}(G)$ i.i.d. Notice that $A(g_i)$ is independent of $g_i$! Therefore $I^K(\theta)$ simplifies to
    \begin{equation}
        I^K(\theta) = -H(q_\theta(\w)) -\E\left(\log\left(\frac{1}{K}\left(q_\theta(\w) + \sum_{j = 1}^{K-1} q_\theta(g_j \cdot \w)\right)\right)\right)
    \end{equation}
    Where the expectation is over $\w \sim q_\theta(\w)$ and $g_1, \dots, g_{K - 1} \sim \text{uniform}(G)$. Again using the fact that inversion in bijective in groups, this can be rewritten as in Theorem \ref{thm:entropy_bound}
    \begin{equation}
        I^K(\theta) = -H(q_\theta(\w)) -\E\left(\log\left(\frac{1}{K}\left(q_\theta(\w) + \sum_{j = 1}^{K-1} q_\theta(g_j^{-1} \cdot \w)\right)\right)\right)
    \end{equation}
    Theorem \ref{thm:entropy_bound}, which is phrased in terms of $H^K(\theta)$, can be recovered from the above, using the fact that $I(g; \w) = H(q_\theta^G(\w)) - H(q_\theta(\w))$.
\end{proof}

\end{document}